\documentclass{article}


\usepackage[nonatbib,preprint]{neurips_2024}
\usepackage[numbers]{natbib}




\usepackage[utf8]{inputenc} 
\usepackage[T1]{fontenc}    
\usepackage{hyperref}       
\usepackage{url}            
\usepackage{booktabs}       
\usepackage{amsfonts}       
\usepackage{nicefrac}       
\usepackage{microtype}      
\usepackage{xcolor}         
\usepackage{bm}
\usepackage{mathtools}
\usepackage{amsmath}
\usepackage{amssymb}
\usepackage{amsthm}
\usepackage[toc,page,header]{appendix}
\usepackage{minitoc}

\DeclareMathOperator*{\argmin}{arg\,min}

\newtheorem{theorem}{Theorem}
\newtheorem{proposition}{Proposition}
\newtheorem{corollary}{Corollary}
\newtheorem{definition}{Definition}

\setcounter{tocdepth}{4}
\setcounter{secnumdepth}{4}

\title{Density Ratio Estimation via Sampling along Generalized Geodesics on Statistical Manifolds}

%

\author{Masanari Kimura, Howard Bondell \\
      \texttt{\{m.kimura, howard.bondell\}@unimelb.edu.au} \\
      School of Mathematics and Statistics \\
      The University of Melbourne
      }

\begin{document}

\doparttoc 
\faketableofcontents 

\part{} 

\maketitle

\begin{abstract}
The density ratio of two probability distributions is one of the fundamental tools in mathematical and computational statistics and machine learning, and it has a variety of known applications. Therefore, density ratio estimation from finite samples is a very important task, but it is known to be unstable when the distributions are distant from each other. One approach to address this problem is density ratio estimation using incremental mixtures of the two distributions. We geometrically reinterpret existing methods for density ratio estimation based on incremental mixtures. We show that these methods can be regarded as iterating on the Riemannian manifold along a particular curve between the two probability distributions. Making use of the geometry of the manifold, we propose to consider incremental density ratio estimation along generalized geodesics on this manifold. To achieve such a method requires Monte Carlo sampling along geodesics via transformations of the two distributions. We show how to implement an iterative algorithm to sample along these geodesics and show how changing the distances along the geodesic affect the variance and accuracy of the estimation of the density ratio. Our experiments demonstrate that the proposed approach outperforms the existing approaches using incremental mixtures that do not take the geometry of the manifold into account.
\end{abstract}

\section{Introduction}
\label{sec:introduction}

The density ratio of two probability distributions is one of the fundamental tools in mathematical and computational statistics.
For example, its applications include estimation under the covariate shift assumption~\citep{shimodaira2000improving,sugiyama2007direct}, outlier detection~\citep{azmandian2012local,hido2011statistical}, variable selection~\citep{azmandian2012gpu,heck2019caveat,oh2016bayesian}, change point detection~\citep{hushchyn2021generalization,kawahara2009change,liu2013change} and causal inference~\citep{matsushita2023estimating,reichenheim2010measures}.
For practical purposes, since the true density ratio is not accessible, density ratio estimation from finite size samples is required.
Therefore, density ratio estimation (DRE) is a very important task that has attracted significant interest~\citep{kanamori2010theoretical,sugiyama2012density,yamada2013relative}.

A key challenge in DRE is the difficulty of the estimation when the source and target distributions are far apart.
One approach to address this problem is DRE using incremental mixtures of two distributions.
\citet{rhodes2020telescoping} have shown the effectiveness of a method that creates $T > 0$ bridging distributions connecting the source and target distributions and then shrinks the gap between the two distributions by utilizing a divide-and-conquer framework.
The method is based on the simple fact that a chain of density ratios using arbitrary bridge distributions restores the original density ratio.
They employ a linear mixture of source and target distributions as the bridge distributions.
In addition, \citet{choi2022density} extended this method to use infinitely continuum bridge distributions.
We refer to the estimator that these methods produce as the Incremental Mixture Density Ratio Estimator (IMDRE).

We investigate the family of IMDRE within the framework of information geometry~\citep{amari2016information,amari2000methods}, which allows for discussion on manifolds created by a set of probability distributions.
From the viewpoint of information geometry, it is known that the set of probability distributions constitutes a Riemannian manifold, which is a non-Euclidean space, and that the parameter space of the probability distributions plays the role of a coordinate system.
With this geometric tool, we can reinterpret IMDRE with linear mixtures as a sequential DRE along a specific curve called the $m$-geodesic.
A geodesic here is defined as a straight path on a manifold when a certain coordinate system is used.
This means that different coordinate systems are equipped with different geodesics.
We propose IMDRE with generalized geodesics called $\alpha$-geodesics.
We call this extension as Generalized IMDRE (GIMDRE).
To realize GIMDRE, sampling along the $\alpha$-geodesics connecting the source and target distributions is required.
However, since the classes of the two probability distributions and their parameters are unknown, direct sampling is a non-trivial problem.
To address this issue, by utilizing the Monte Carlo method with importance sampling~\citep{elvira2014advances,tokdar2010importance}, we show that sampling according to $\alpha$-geodesics can be achieved by giving the density ratio of the source and target distributions.
The next problem here is that the density ratio estimation by GIMDRE relies on sampling along the $\alpha$-geodesic, while the sampling depends on the density ratio.
To resolve this interdependent deadlock, we develop an optimization algorithm that alternately estimates the density ratio and updates the importance weights for sampling.
We design numerical experiments to demonstrate the usefulness of the proposed method and its behavior.
To summarize, our contributions are as follows.
\begin{itemize}
    \item We geometrically reinterpret IMDRE through the lens of information geometry, and we show that IMDRE with linear mixtures can be viewed as a sequential DRE along a special curve called the $m$-geodesic. Furthermore, asymptotic analysis shows that certain geodesics appear in the evaluation of the DRE (Section~\ref{sec:geometric_reinterpretation_imdre}, Theorems~\ref{thm:linear_imdre}, ~\ref{thm:asymptotic_expansion} and Corollary~\ref{thm:asymptotic_hellinger}).
    \item We consider extending IMDRE with generalized geodesics called $\alpha$-geodesics. We refer to this extension as GIMDRE, and we show that the algorithm to obtain GIMDRE can be achieved by alternating optimization of density ratio estimation and updating the importance weighting for sampling based on the Monte Carlo procedure (Section~\ref{sec:algorithm_gimdre} and Figure~\ref{fig:alternating_algorithm}). 
    \item We design numerical experiments, including permutation test, an important task in statistics, to investigate the behavior of our algorithm and demonstrate its effectiveness (Section~\ref{sec:numerical_experiments}, Figures~\ref{fig:toy_example_gaussian}, ~\ref{fig:alpha_vs_kld}, ~\ref{fig:ptest}, and Tables~\ref{tab:gaussian_n_steps}, ~\ref{tab:sample_size_alpha}, ~\ref{tab:dimension_alpha}).
\end{itemize}
Our study provides insights into the connections between statistical procedures and differential geometry.
We show that the use of $\alpha$-geodesics for the estimation of density ratios improves upon both the traditional IMDRE and the direct estimation.
See the appendix for technical details such as proofs of theorems and experimental setups, as well as additional results including discussion on effective sample size, analysis of variance, and additional numerical experiments.

\section{Background and Preliminary}
\label{sec:background}
In this section, we first provide the background and preliminary knowledge that is required throughout this study.
See Appendix~\ref{apd:related_work} for related literature.
\subsection{Problem Setting}
\label{sec:problem_setting}
Let $\mathcal{X}\subset\mathbb{R}^d$ be the $d$-dimensional data domain with $d \in \mathbb{N}$.
Suppose that we are given independent and identically distributed (i.i.d.) samples $\{\bm{x}_i^s\}^{n_s}_{i=1}$ and $\{\bm{x}_i^t\}^{n_t}_{i=1}$ of sizes $n_s, n_t \in \mathbb{N}$, from $p_s(\bm{x})$ and $p_t(\bm{x})$, the source and target distributions, respectively.
The goal is to estimate the density ratio
\begin{align}
    r(\bm{x}) \coloneqq p_s(\bm{x}) / p_t(\bm{x}), \label{eq:density_ratio}
\end{align}
where we assume that $p_t(\bm{x})$ is strictly positive over the domain $\mathcal{X}$.
Let $\hat{r}(\bm{x})$ be the estimator of $r(\bm{x})$, and what we want to achieve is to get $\hat{r}(\bm{x})$ and $r(\bm{x})$ as close as possible.

\subsection{Incremental Mixture Density Ratio Estimation}
\label{sec:imdre}
The IMDRE family includes Telescoping DRE~\citep{rhodes2020telescoping} and $\infty$-DRE~\citep{choi2022density}, and its formulation is based on a density ratio chain with arbitrary bridge distributions as follows.
\begin{align}
    r(\bm{x}) = \frac{p_s(\bm{x})}{p_t(\bm{x})} = \frac{p_s(\bm{x})}{p_1(\bm{x})}\frac{p_1(\bm{x})}{p_2(\bm{x})}\cdots\frac{p_{m-1}(\bm{x})}{p_m(\bm{x})}\frac{p_m(\bm{x})}{p_t(\bm{x})},
\end{align}
where $p_1(\bm{x}),\dots,p_m(\bm{x})$ are $m$ bridge distributions.
They consider the linear mixtures to gradually transport sample $\{\bm{x}_i^s\}^{n_s}_{i=1}$ from the source distribution $p_s(\bm{x})$ to sample $\{\bm{x}_i^t\}^{n_t}_{i=1}$ from the target distribution $p_t(\bm{x})$ as $\bm{x}_i^k = (1 - \lambda)\bm{x}_i^s + \lambda\bm{x}_i^t$ with $\lambda \in (0, 1)$.
Note that \citet{rhodes2020telescoping} use $\lambda = \sqrt{1 - a^2_k}$ where $a_k$ form an increasing sequence from $0$ to $1$.
For each successive density ratio, once the samples are taken, standard approaches can be used to estimate the density ratio, such as kernel logistic regression.
Their experimental results report that such a framework yields good density ratio estimates.

\section{Geometric Reinterpretation of IMDRE}
\label{sec:geometric_reinterpretation_imdre}
To begin with, we consider the statistical manifolds constructed by a set of probability distributions in order to investigate the geometric behavior of IMDRE.
Let $\mathcal{M} = \{p(\cdot; \bm{\theta}) \mid \bm{\theta} \in \Theta \}$ be a statistical model parametrized by $\bm{\theta} \in \Theta$, where $\Theta \subset \mathbb{R}^b$ is a parameter space.
Let $\bm{G} = (g_{ij})$ be the Fisher information matrix defined as $g_{ij} = \mathbb{E}[\partial_i\ell(\bm{\theta})\partial_j\ell(\bm{\theta})]$, where $\ell(\bm{\theta}) = \ln p(\bm{x}; \bm{\theta})$ and $\partial_i = \partial / \partial\theta_i$.
Here, it is known that $\mathcal{M}$ is a Riemannian manifold with $\Theta$ as the coordinate system and $\bm{G}$ as coefficients of the metric.
A coordinate system is intuitively like location information for determining a point on a manifold, which can be understood from the fact that determining the parameters determines the corresponding probability distribution.
The metric enables us to measure distances and angles on manifolds and to determine connections, and its requirements are to be symmetric, positive definite, and non-degenerate.
Under appropriate regularity conditions, the Fisher information matrix satisfies these requirements.
We avoid using geometry notations more than necessary and require only knowledge of basic covariant tensors.
However, additional knowledge of differential geometry is included in Appendix~\ref{apd:summary_of_geometry} to make it self-contained.
For more details, refer to textbooks on differential geometry~\citep{guggenheimer2012differential,kreyszig2013differential} and Riemannian geometry~\citep{gallot1990riemannian,klingenberg1995riemannian,lee2018introduction,sakai1996riemannian}.
Also see Appendix~\ref{apd:proofs} for proofs of the theoretical results in this section.

Consider a smooth curve $\bm{\theta}(\lambda)\colon [0, 1] \to \Theta$ in the parameter space $\Theta$ with $\lambda \in [0, 1]$, and a curve $\gamma(\lambda)\colon [0, 1] \to \mathcal{M}$ on the statistical manifold $\mathcal{M}$ is defined as
\begin{align}
    \gamma(\lambda) \coloneqq p(\bm{x}; \bm{\theta}(\lambda)), \quad \forall \lambda \in [0, 1]. \label{eq:geodesics}
\end{align}
The velocity of the curve $\gamma(\lambda)$ is given by $\dot{\gamma}(\lambda) = (d / d\lambda)p(\bm{x}; \bm{\theta}(\lambda))$.
A curve $\gamma$ is called $\nabla^{(\alpha)}$-autoparallel if $\dot{\gamma}(\lambda)$ is parallel transported along $\gamma(\lambda)$, that is, the acceleration with respect to the $\nabla^{(\alpha)}$-connection vanishes
\begin{align}
    \nabla^{(\alpha)}_{\dot{\gamma}(\lambda)}\dot{\gamma}(\lambda) = 0, \quad \forall \lambda \in [0, 1].
\end{align}
Here, let $\mathcal{X}(\mathcal{M})$ be a set of vector fields on $\mathcal{M}$, and $\nabla^{(\alpha)}$-connection is defined as
\begin{align}
    \nabla^{(\alpha)}_X \coloneqq \frac{1+\alpha}{2}\nabla^{(1)}_X + \frac{1-\alpha}{2}\nabla^{(-1)}_X, \quad \forall X \in \mathcal{X}(\mathcal{M}),
\end{align}
where $\nabla^{(1)}$ and $\nabla^{(-1)}$ are operators satisfying
\begin{align}
    g(\nabla^{(1)}_{\partial_i}\partial_j, \partial_k) &= \mathbb{E}\left[(\partial_i\partial_j\ell)(\partial_k\ell)\right], \\
    g(\nabla^{(-1)}_{\partial_i}\partial_j, \partial_k) &= \mathbb{E}\left[(\partial_i\partial_j\ell + \partial_i\ell\partial_j\ell)(\partial_k\ell)\right],
\end{align}
and $2$-covariant tensor $g(X, Y)$ is a Riemannian metric on $\mathcal{M}$ as
\begin{align}
    g(X, Y) = \sum^b_{i,j=1}g_{ij}v_iw_j, \quad X = \sum^b_{i=1}v_i\partial_i,\ Y = \sum^n_{i=1}w_i\partial_i.
\end{align}
The operation $\nabla^{(\cdot)}\colon\mathcal{X}(\mathcal{M})\times\mathcal{X}(\mathcal{M})\to\mathcal{X}(\mathcal{M})$ is called a linear connection, and see Appendix~\ref{apd:summary_of_geometry} for the details.
If the curve in Eq.~\eqref{eq:geodesics} is $\nabla^{(\alpha)}$-autoparallel at some $\alpha \in \mathbb{R}$, it is called $\alpha$-geodesic.
The explicit form of $\alpha$-geodesics connecting two probability distributions is given as follows.
\begin{definition}[$\alpha$-geodesics~\citep{amari2016information}]
    Let $p, q \in \mathcal{M}$ be two probability distributions.
    The $\alpha$-geodesic $\gamma^{(\alpha)}(\lambda)\colon [0, 1] \to \mathcal{M}$ connecting $p(\bm{x})$ and $q(\bm{x})$ is defined as
    \begin{align}
        \gamma^{(\alpha)}(\lambda) = \begin{cases}
            \left\{(1-\lambda)p(\bm{x})^{\frac{1-\alpha}{2}} + \lambda q(\bm{x})^{\frac{1-\alpha}{2}}\right\}^{\frac{2}{1-\alpha}}, & (\alpha \neq 1), \\
            \exp\left\{(1 - \lambda) \ln p(\bm{x}) + \lambda \ln q(\bm{x})\right\}, & (\alpha = 1).
        \end{cases}
    \end{align}
    ignoring the normalization factor.
\end{definition}
Given an $\alpha\in\mathbb{R}$, the $\alpha$-geodesics are known to be the minimizer of the following $\alpha$-divergence $D_\alpha [p \| q]$, that is, for a fixed $\lambda \in [0, 1]$, $\gamma^{(\alpha)}(\lambda) = \argmin_{\pi \in \mathcal{M}}((1-\lambda)D_{\alpha}[p\|\pi] + \lambda D_{\alpha}[q\|\pi])$.
\begin{definition}[$\alpha$-divergence~\citep{amari2009alpha}]
    Let $\alpha \in \mathbb{R}$.
    For two probability distributions $p$ and $q$, the $\alpha$-divergence $D_\alpha\colon\mathcal{M}\times\mathcal{M}\to [0,+\infty)$ is defined as
    \begin{align}
        D_\alpha[p \| q] = \frac{1}{\alpha(\alpha - 1)}\left( 1- \int p(\bm{x})^\alpha q(\bm{x})^{1-\alpha}d\bm{x}\right). \label{eq:alpha_geodesics}
\end{align}
\end{definition}

Here, consider the following IMDRE with linear mixtures as bridge distributions.
\begin{align}
    r(\bm{x}) = \frac{p_s(\bm{x})}{p_t(\bm{x})} = \frac{p_s(\bm{x})}{(1-\lambda_1)p_s(\bm{x}) + \lambda_1 p_t(\bm{x})}\cdots\frac{(1-\lambda_m) p_s(\bm{x}) + \lambda_m p_t(\bm{x})}{p_t(\bm{x})}, \label{eq:imdre_linear}
\end{align}
where $\lambda_1,\dots,\lambda_m \in (0, 1)$ is a non-decreasing sequence.
For such an IMDRE, a simple calculation yields the following.
\begin{theorem}
    \label{thm:linear_imdre}
    The IMDRE given by Eq.~\eqref{eq:imdre_linear} can be regarded as a DRE along a geodesic with $\alpha = -1$.
    Additionally, the curve to which the bridge distributions of this IMDRE belong is the minimizer of the KL-divergence between $p_s$ and $p_t$.
\end{theorem}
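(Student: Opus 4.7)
My plan is to split the theorem into its two independent claims and dispatch each with a short direct calculation.

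For the first claim (that IMDRE with linear mixtures is DRE along the $\alpha=-1$ geodesic), I will simply substitute $\alpha=-1$ into the explicit formula for $\gamma^{(\alpha)}(\lambda)$ given in the definition of $\alpha$-geodesics. The exponents $(1-\alpha)/2$ and $2/(1-\alpha)$ both reduce to $1$, so the formula collapses to $\gamma^{(-1)}(\lambda) = (1-\lambda)\,p_s(\bm{x}) + \lambda\,p_t(\bm{x})$. Setting $\lambda = \lambda_k$ for each $k=1,\dots,m$ reproduces exactly the bridge distributions appearing in Eq.~\eqref{eq:imdre_linear}, so the telescoping product there is a sequence of density ratios between consecutive points on the $\alpha=-1$ geodesic from $p_s$ to $p_t$, which is the precise sense in which IMDRE is DRE along that geodesic.

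For the second claim, I will exhibit the same curve as the pointwise minimizer of a weighted KL functional. Define $L_\lambda(\pi) := (1-\lambda)\,\mathrm{KL}[p_s\|\pi] + \lambda\,\mathrm{KL}[p_t\|\pi]$ over $\pi\in\mathcal{M}$. Expanding $\mathrm{KL}[p\|\pi] = \int p\log p\,d\bm{x} - \int p\log \pi\,d\bm{x}$ and grouping the $\pi$-independent terms into a constant $C$ yields $L_\lambda(\pi) = C - \int \bigl[(1-\lambda)p_s(\bm{x}) + \lambda p_t(\bm{x})\bigr]\log \pi(\bm{x})\,d\bm{x}$. Minimizing subject to $\int \pi\,d\bm{x}=1$ via a Lagrange multiplier and imposing pointwise stationarity forces $\pi^{\ast}(\bm{x}) \propto (1-\lambda)p_s(\bm{x}) + \lambda p_t(\bm{x})$; since the right-hand side already integrates to unity, the multiplier equals $1$ and $\pi^{\ast} = \gamma^{(-1)}(\lambda)$. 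Strict convexity of $-\log$ ensures this is the unique global minimizer, so the curve of bridge distributions is indeed the KL-minimizer in the stated sense.

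The first part is essentially unpacking the definition, so the only substantive step is the brief variational calculation in the second part. The one point requiring care is interpretation: "the curve is the minimizer of KL-divergence" should be read in the same pointwise/variational sense as the $\alpha$-divergence characterization stated immediately below the $\alpha$-geodesic definition, with KL playing the role of $D_\alpha$. I do not anticipate any deeper obstacle.
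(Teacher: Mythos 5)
Your proof is correct. For the first claim you do exactly what the paper does: substitute $\alpha=-1$ into the explicit formula for $\gamma^{(\alpha)}(\lambda)$ and observe that both exponents collapse to $1$, recovering the linear mixture bridges of Eq.~\eqref{eq:imdre_linear}. For the second claim, however, you take a genuinely different and more self-contained route. The paper's proof is a one-liner: it invokes the general fact stated after the $\alpha$-geodesic definition, namely that $\gamma^{(\alpha)}(\lambda)=\argmin_{\pi}\bigl((1-\lambda)D_\alpha[p\|\pi]+\lambda D_\alpha[q\|\pi]\bigr)$, and then asserts that the $\alpha$-divergence ``becomes KL-divergence with $\alpha=-1$,'' which under the parametrization of $D_\alpha$ actually printed in the paper (where the KL limits sit at $\alpha\to 0$ and $\alpha\to 1$) is a convention-dependent identification that the paper does not spell out. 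You instead verify the KL-minimizer property directly: expanding $L_\lambda(\pi)=(1-\lambda)\,\mathrm{KL}[p_s\|\pi]+\lambda\,\mathrm{KL}[p_t\|\pi]$ into a constant minus the cross-entropy $\int\bigl[(1-\lambda)p_s+\lambda p_t\bigr]\log\pi\,d\bm{x}$ and applying the standard Gibbs/Lagrange argument gives $\pi^{\ast}=(1-\lambda)p_s+\lambda p_t$ exactly, with uniqueness from strict convexity. What your approach buys is a proof that does not depend on the $\alpha$-divergence convention at all and is verifiable line by line; what the paper's buys is brevity and a uniform statement that specializes to every $\alpha$ at once. Both establish the theorem as stated.
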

Recall that $\pi^*$ is a minimizer of KL-divergence $D_{\mathrm{KL}}\colon\mathcal{M}\times\mathcal{M}\to[0, +\infty)$ with respect to $\lambda$, $p_s$ and $p_t$ if it satisfies $\pi^* = \argmin_{\pi \in \mathcal{M}} (1 - \lambda)D_{\mathrm{KL}}[p_s \| \pi] + \lambda D_{\mathrm{KL}}[p_t \| \pi]$.
Here, the geodesics of $\alpha=-1$ are specially called $m$-geodesics ($m$ stands for mixture).
Similarly, the geodesics of $\alpha=+1$ are specially called $e$-geodesics ($e$ stands for exponential).

Now consider the evaluation of the density ratios obtained.
Given that the Eq.~\eqref{eq:density_ratio} can be transformed as in $p_s(\bm{x}) = r(\bm{x})p_t(\bm{x})$, a well-estimated density ratio should minimize
\begin{align}
    D_{\mathrm{KL}}[p_s \| \hat{r}\cdot p_t] \coloneqq \int_{\mathcal{X}}p_s(\bm{x})\log\frac{p_s(\bm{x})}{\hat{r}(\bm{x})p_t(\bm{x})}dx - 1 + \int_{\mathcal{X}} p_t(\bm{x})\hat{r}(\bm{x})d\bm{x}, \label{eq:d_ukl}
\end{align}
where $D_{\mathrm{KL}}[p \| q]$ is the unnormalized version of KL-divergence between $p$ and $q$.
We can then show the following theorem.
\begin{theorem}
    \label{thm:asymptotic_expansion}
    The evaluation along the m-geodesics of any density ratio estimator $\hat{r}(\bm{x})$ is asymptotically evaluated along the $\alpha$-geodesics with $\alpha=2$.
\end{theorem}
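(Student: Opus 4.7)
The plan is to Taylor-expand the evaluation functional $D_{\mathrm{KL}}[p_s \| \hat{r}\cdot p_t]$ of Eq.~\eqref{eq:d_ukl} around the true ratio $r = p_s/p_t$ and identify the leading quadratic contribution with the $\alpha=2$ divergence of Eq.~\eqref{eq:alpha_geodesics}. Since $\alpha$-geodesics are the minimizers of the corresponding $\alpha$-divergence along paths on $\mathcal{M}$ (as recalled right after the definition of $\alpha$-geodesic), such an identification will deliver the stated claim.

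First I would set $\delta(\bm{x}) = \hat{r}(\bm{x}) - r(\bm{x})$ and expand the log-ratio via $\log(1 + \delta/r) = \delta/r - (\delta/r)^2/2 + O(\delta^3)$, using the identities $p_s/r = p_t$ and $p_s/r^2 = p_t^2/p_s$. A short computation shows that the first-order terms arising from $\int p_s \log(p_s/(\hat r p_t))\,d\bm{x}$ and from the correction $\int p_t \hat r\,d\bm{x} - 1$ exactly cancel, so that Eq.~\eqref{eq:d_ukl} reduces to
\begin{align*}
D_{\mathrm{KL}}[p_s \| \hat{r}\cdot p_t] = \tfrac{1}{2}\int \frac{p_t(\bm{x})^2}{p_s(\bm{x})}\,\delta(\bm{x})^2\,d\bm{x} + O(\delta^3).
\end{align*}
Next I would recast this quadratic form as a Pearson-type divergence. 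Introducing the ``estimated source'' $\tilde{p} = \hat{r}\,p_t$, so that $\tilde{p} - p_s = \delta\cdot p_t$, the integrand rewrites as $(\tilde{p} - p_s)^2/p_s$ and the leading term becomes $\tfrac{1}{2}\chi^2(\tilde{p},p_s)$. By Eq.~\eqref{eq:alpha_geodesics} with $\alpha=2$, one has $D_2[\tilde p \| p_s] = \tfrac{1}{2}(1 - \int \tilde p^2/p_s\,d\bm{x})$, which equals $-\tfrac{1}{2}\chi^2(\tilde p, p_s)$ up to a first-order normalization term that was already eliminated in the preceding cancellation. Hence the asymptotic second variation of the KL evaluation coincides, up to an overall sign, with the $\alpha=2$ divergence, and the bridge curve minimizing this asymptotic evaluation between $p_s$ and $p_t$ is therefore the $\alpha=2$-autoparallel curve, i.e., the $\alpha=2$ geodesic. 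Combined with Theorem~\ref{thm:linear_imdre}, which places the linear-mixture bridges of Eq.~\eqref{eq:imdre_linear} on the m-geodesic ($\alpha=-1$), this gives the claim that while the IMDRE is executed along the m-geodesic, its asymptotic evaluation is governed by the $\alpha=2$ geometry.

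The hardest part will be the bookkeeping of the two asymptotic regimes at play: the pointwise error $\delta$ of $\hat r$ against the true ratio versus the step $\Delta\lambda$ between consecutive bridges along the m-geodesic. One must verify that the quadratic identification transfers cleanly from a single bridge pair to the full telescoping sum, so that the cumulative loss becomes an integral of $\alpha=2$ divergences over $\lambda$ rather than merely a pointwise statement, and that the first-order deviation $\int \tilde p\,d\bm{x} - 1 = O(\delta)$ arising from the fact that $\tilde{p}$ is not a priori normalized does not contaminate the comparison with $D_2[\tilde p\|p_s]$ at the relevant order.
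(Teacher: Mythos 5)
Your proposal is correct and follows essentially the same route as the paper's own proof: a second-order Taylor expansion of the unnormalized KL evaluation around the true ratio, cancellation of the first-order terms against $\int p_t\hat{r}\,d\bm{x}-1$, identification of the surviving quadratic term $\tfrac{1}{2}\int(p_s-\hat{r}p_t)^2/p_s\,d\bm{x}$ with the Pearson divergence, and then with the $\alpha$-divergence at $\alpha=2$ whose minimizing curve is the $\alpha=2$ geodesic. Your $\delta=\hat{r}-r$ bookkeeping is a cleaner presentation of the same algebra, and the normalization and telescoping caveats you flag are likewise left implicit in the paper's argument.
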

Furthermore, if the parameter $\alpha$ that determines the geodesic is well determined, the divergence that appears can be bounded above by the Lipschitz continuity.
In fact, KL-divergence is not Lipschitz continuous since $x \to 0$ and $\ln x \to +\infty$, but the $\alpha$-divergence in the case of $\alpha=0$ satisfies this.
Utilizing this, we obtain the following results from \citet{sugiyama2008direct}.
\begin{corollary}
    \label{thm:asymptotic_hellinger}
    Under the appropriate assumptions, the convergence bound of a density ratio estimator $\hat{r}$ evaluated along $\alpha$-geodesic with $\alpha=0$ is $O_p(n^{-1 / (2 + \zeta_n)} + \sqrt{\zeta})$, where $\zeta_n$ is a variable depending on the sample size $n$.
    and $\zeta$ is some constant.
\end{corollary}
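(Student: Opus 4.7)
The plan is to decompose the evaluation of $\hat{r}$ along the $\alpha=0$-geodesic into an estimation-error component, which inherits the classical nonparametric rate from \citet{sugiyama2008direct}, plus a deterministic geometric component that can be bounded through Lipschitz continuity of the $\alpha=0$-divergence.

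First, I would identify the $\alpha=0$-divergence concretely. Substituting $\alpha=0$ in Eq.~\eqref{eq:alpha_geodesics} yields (up to sign/constants) the squared Hellinger form $\int (\sqrt{p(\bm{x})} - \sqrt{q(\bm{x})})^2 d\bm{x}$. Because $\sqrt{\cdot}$ is $\tfrac{1}{2}$-H\"older and bounded on any compact set, the resulting functional is Lipschitz continuous in its two arguments, with a constant depending only on a uniform bound on the densities. This is precisely the property that fails for KL-divergence (whose integrand diverges as one argument tends to $0$), and it is what motivates choosing $\alpha=0$ in the statement.

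Next, I would use Theorem~\ref{thm:asymptotic_expansion} to expand the evaluation functional along the geodesic. The expansion naturally splits into two pieces: the $\alpha=0$-divergence between $p_s$ and $\hat{r}\cdot p_t$, and residual terms indexed by points on the geodesic $\gamma^{(0)}(\lambda)$ joining $p_s$ and $p_t$. The Lipschitz property from the previous step then bounds the residual contribution uniformly in $\lambda$ by some constant $\sqrt{\zeta}$, absorbing the path-dependence of the geodesic into a single deterministic term.

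Finally, I would invoke the convergence theorem of \citet{sugiyama2008direct}. Under their bracketing-entropy assumptions on the density ratio model, the direct DRE estimator satisfies the discrepancy bound $O_p(n^{-1/(2+\gamma_n)})$, where $\gamma_n$ reflects the complexity of the function class and the sample size. Applied to the estimation-error piece, this gives the rate $O_p(n^{-1/(2+\zeta_n)})$, and adding the two pieces produces the claimed $O_p(n^{-1/(2+\zeta_n)} + \sqrt{\zeta})$ bound. The main obstacle I anticipate is ensuring that the Lipschitz bound applies uniformly along the full geodesic, since the $\alpha=0$-divergence integrand must remain well-behaved for every $\lambda\in[0,1]$. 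This is addressed by the mild uniform boundedness conditions already implicit in the assumptions inherited from \citet{sugiyama2008direct}, which propagate along $\gamma^{(0)}$ by convexity of the Hellinger integrand.
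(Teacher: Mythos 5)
Your proposal correctly identifies the $\alpha=0$ divergence with the (squared) Hellinger distance and correctly ends by citing the covering-number machinery of \citet{sugiyama2008direct}, but the central mechanism by which the Hellinger distance actually enters is missing, and the route you take would not produce it. Theorem~\ref{thm:asymptotic_expansion} expands the unnormalized KL evaluation into the Pearson divergence, i.e.\ the $\alpha=2$ case; it does not ``naturally split'' into an $\alpha=0$ piece plus geodesic residuals, so the decomposition you build on does not exist. In the paper's argument the Hellinger quantity arises from a different source entirely: $\zeta_n$ is \emph{defined} as the tolerance in the empirical negative log-likelihood,
\begin{align}
    -\frac{1}{n_s}\sum_{i}\ln \hat{r}(\bm{x}^s_i) \leq -\frac{1}{n_s}\sum_{i}\ln \frac{r(\bm{x}^s_i)}{\frac{1}{n_t}\sum_{j}r(\bm{x}^t_j)} + \zeta_n, \nonumber
\end{align}
and the convexity of $-\ln x$ applied to the average of $\hat{r}$ and the normalized true ratio (setting $\xi_1=\hat{r}(\bm{x}^s_i)$ and $\xi_2$ proportional to the normalized $r$) converts this excess-risk inequality into an upper bound involving $p_t\bigl(2\sqrt{\xi_1\xi_2}-\xi_1-\xi_2\bigr)$, which is exactly the generalized Hellinger ($\alpha=0$) quantity. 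This convexity device --- the standard Wong--Shen/van de Geer step also used by \citet{nguyen2007estimating} --- is the whole point of the proof and is absent from your plan.

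Your reading of the two terms in the bound is also off. The $\sqrt{\zeta}$ is not a ``deterministic geometric component'' obtained by Lipschitz-bounding residuals along $\gamma^{(0)}(\lambda)$: in the paper's assumptions $\zeta$ is the constant with $0<\zeta<2$ appearing in the covering-number condition on $\sup_{q\in\mathcal{M}}\ln N(\epsilon, F^M, L_2(q))$, and the $\sqrt{\zeta}$ term is inherited directly from the rate statement of \citet{sugiyama2008direct}. The Lipschitz continuity of the $\alpha=0$ divergence is offered in the main text only as motivation for why $\alpha=0$ is the right geodesic for evaluation (KL fails to be Lipschitz near zero); it is not a step of the proof and cannot by itself manufacture the $\sqrt{\zeta}$ term. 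So while your final answer has the right shape, the two ingredients that make the corollary true --- the definition of $\zeta_n$ as an empirical excess risk and the convexity trick that turns that excess risk into a Hellinger bound --- are both missing from your argument.
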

These results suggest the following:
\begin{itemize}
    \item Choosing how to construct the bridge distributions in IMDRE is equivalent to choosing what curve on the statistical manifold to follow for density ratio estimation.
    \item The Behavior of the density ratio estimator depends on which curve to construct the bridge distributions along. In fact, Theorem~\ref{thm:asymptotic_expansion} shows that in asymptotic situations geodesics with $\alpha=2$ and $\alpha=0$ appear.
\end{itemize}
These suggestions naturally motivate the development of IMDRE along general $\alpha$-geodesics.

\section{Optimization Algorithm for GIMDRE along Generalized Geodesics}
\label{sec:algorithm_gimdre}
\begin{figure}
    \centering
    \includegraphics[width=\linewidth]{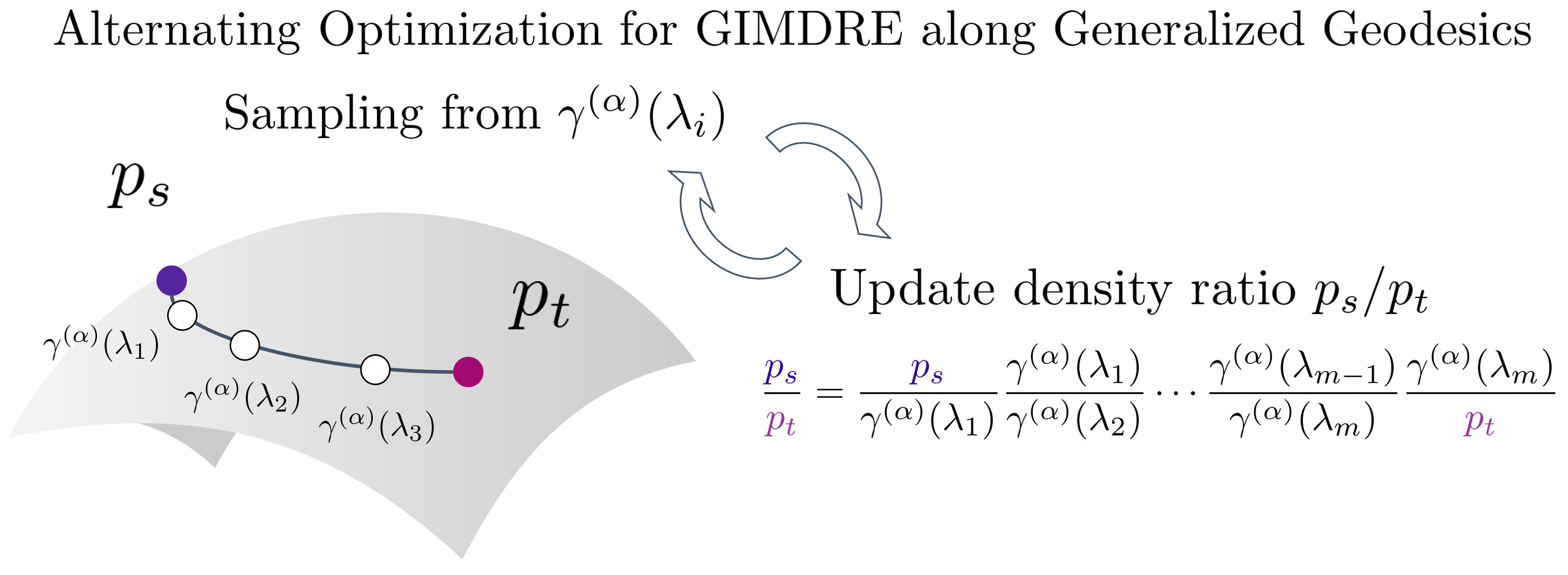}
    \caption{Overview of alternating optimization algorithms for GIMDRE along $\alpha$-geodesics.
    $\gamma^{(\alpha)}(\lambda_1),\dots,\gamma^{(\alpha)}(\lambda_m)$ are $m$ bridge distributions on the $\alpha$-geodesic parameterized by $\lambda \in [0, 1]$, respectively.}
    \label{fig:alternating_algorithm}
\end{figure}
We now construct IMDRE along general $\alpha$-geodesics, Generalized IMDRE (GIMDRE).
To do so along the geodesics, we propose to utilize the following importance sampling technique~\citep{tokdar2010importance}: consider the expectation
\begin{align}
    I \coloneqq \mathbb{E}_{\gamma^{(\alpha)}(\lambda)(\bm{x})}\left[g(\bm{x})\right] = \int_{\mathcal{X}}g(\bm{x})\gamma^{(\alpha)}(\lambda)(\bm{x})d\bm{x},
\end{align}
of any function $g(\bm{x})$ of a random variable $\bm{x} \sim \gamma^{(\alpha)}(\lambda)(\bm{x})$.
Let $\{\bm{x}_1,\dots,\bm{x}_T\}$ be the independent instances of sample size $T$, and the standard Monte Carlo integration is
\begin{align}
    \hat{I} \coloneqq \frac{1}{T}\sum^N_{i=1}g(\bm{x}_i).
\end{align}
From the law of large numbers, $\hat{I}$ converges to $I$ as $T\to\infty$.
Now, since it is difficult to sample directly from $\gamma^{(\alpha)}(\lambda)(\bm{x})$, considering another distribution $\pi(\bm{x})$, which allows for easy sampling, and we have
\begin{align}
    I = \int_{\mathcal{X}}g(\bm{x})\gamma^{(\alpha)}(\lambda)(\bm{x})d\bm{x} = \int_{\mathcal{X}}\frac{\gamma^{(\alpha)}(\lambda)(\bm{x})}{\pi(\bm{x})}g(\bm{x})\pi(\bm{x})d\bm{x} = \mathbb{E}_{\pi(\bm{x})}\left[\frac{\gamma^{(\alpha)}(\lambda)(\bm{x})}{\pi(\bm{x})}g(\bm{x})\right]. \label{eq:importance_sampling}
\end{align}
Then, let $w(\bm{x}) = \gamma^{(\alpha)}(\lambda)(\bm{x}) / \pi(\bm{x})$ and we have
\begin{align}
    \hat{I} = \frac{1}{T}\sum^T_{i=1} w(\bm{x}^*_i)g(\bm{x}^*_i) = \frac{1}{T}\sum^T_{i=1}\frac{\gamma^{(\alpha)}(\lambda)(\bm{x}^*_i)}{\pi(\bm{x}^*_i)}g(\bm{x}^*_i). 
\end{align}
where $\bm{x}^*_i \sim \pi(\bm{x})$.
Thus, the original problem falls into the problem of approximating the importance weighting $w(\bm{x})$.
Note that the normalization constant of the $\alpha$-geodesic is unknown, and we can consider the self-normalized importance sampling technique.
Then, let $\gamma^{(\alpha)}(\lambda)(\bm{x}) = q(\bm{x}) / (\int_{\mathcal{X}}q(\bm{x})d\bm{x}) \propto q(\bm{x})$ for $q \in \mathcal{M}$, and rewrite Eq.~\eqref{eq:importance_sampling} as
    $I = \int_{\mathcal{X}}g(\bm{x})\frac{q(\bm{x})}{\int_{\mathcal{X}}q(x)d\bm{x}} = \frac{\int_{\mathcal{X}}g(\bm{x})q(\bm{x})}{\int_{\mathcal{X}}q(\bm{x})d\bm{x}}$.
Furthermore, if we assume that we are sampling from $p_s(\bm{x})$, we have
\begin{align}
    I = \frac{\int_{\mathcal{X}}g(\bm{x})\frac{q(\bm{x})}{p_s(\bm{x})}p_s(\bm{x})d\bm{x}}{\int_{\mathcal{X}}\frac{q(\bm{x})}{p_s(\bm{x})}p_s(\bm{x})d\bm{x}} = \frac{\mathbb{E}_{p_s(\bm{x})}\left[g(\bm{x})\frac{q(\bm{x})}{p_s(\bm{x})}\right]}{\mathbb{E}_{p_s(\bm{x})}\left[\frac{q(\bm{x})}{p_s(\bm{x})}\right]}. \label{eq:snis_expectation}
\end{align}
Applying importance sampling to the denominator and numerator of Eq.~\eqref{eq:snis_expectation} yields the following estimator.
\begin{align}
    \hat{I} = \frac{\frac{1}{T}\sum^T_{i=1}g(\bm{x}^*_i)w(\bm{x}^*_i)}{\frac{1}{T}\sum^T_{i=1}w(\bm{x}^*_i)} = \sum^T_{i=1}\frac{w(\bm{x}^*_i)}{\sum^T_{j=1}w(\bm{x}^*_j)}g(\bm{x}^*_i) = \sum^T_{i=1}\frac{\frac{\gamma^{(\alpha)}(\lambda)(\bm{x}^*_i)}{\pi(\bm{x}^*_i)}}{\sum^T_{j=1}\frac{\gamma^{(\alpha)}(\lambda)(\bm{x}^*_j)}{\pi(\bm{x}^*_j)}}g(\bm{x}^*_i).
\end{align}
In particular, suppose that the proxy distribution $\pi(\bm{x})=p_s(\bm{x})$ for importance sampling, sampling from the $\alpha$-geodesic connecting $p_s(\bm{x})$ and $p_t(\bm{x})$ can be accomplished by the following $w(\bm{x}^*)$.
\begin{align}
    w_\lambda(\bm{x}^*_i) = \frac{\gamma^{(\alpha)}(\lambda)(\bm{x}^*_i)}{p_s(\bm{x}^*_i)} = \left\{1 - \lambda + \lambda\left(\frac{p_t(\bm{x}^*_i)}{p_s(\bm{x}^*_i)}\right)^{\frac{1-\alpha}{2}}\right\}^{\frac{2}{1-\alpha}} = \left\{1 - \lambda + \lambda\left(\frac{1}{r(\bm{x}^*_i)}\right)^{\frac{1-\alpha}{2}}\right\}^{\frac{2}{1-\alpha}} \nonumber 
\end{align}
The key trick to obtain the density ratio $\gamma^{(\alpha)}(\lambda_i)(\bm{x}) / \gamma^{(\alpha)}(\lambda_j)(\bm{x})$ is the following cancellation out of the proxy distribution $\pi(\bm{x}) = p_s(\bm{x})$ using the ratio of the weighting functions $w_{\lambda_i}(\bm{x}) / w_{\lambda_j}(\bm{x})$.
\begin{align}
    \frac{w_{\lambda_i}(\bm{x})}{w_{\lambda_j}(\bm{x})} = \frac{\gamma^{(\alpha)}(\lambda_i)(\bm{x}) / p_s(\bm{x})}{\gamma^{(\alpha)}(\lambda_j)(\bm{x}) / p_s(\bm{x})} = \frac{\gamma^{(\alpha)}(\lambda_i)(\bm{x})}{\gamma^{(\alpha)}(\lambda_j)(\bm{x})}.
\end{align}
As a result, the ratio of the two distributions on the geodesic can be rewritten in a form that depends only on the density ratio $r(\bm{x})$.
Thus, if a weighting function is obtained for each $\lambda$, the density ratios between distributions on the geodesic can be recovered.
Note that the case $\pi(\bm{x})=p_t(\bm{x})$ can be computed almost similarly.
For example, if the function $g(\bm{x})$ is a loss function for a probabilistic classifier $f(\bm{x})$, as in logistic regression, with $y=+1$ for $\bm{x}\sim p_s$ and $y=-1$ for $\bm{x}\sim p_t$, the output learned by such a Monte Carlo integral approximation can be used for density ratio estimation.

However, for the importance sampling, it is necessary to obtain an importance weighting function $w_\lambda(\bm{x})$ that depends on the estimate of the density ratio $r(\bm{x})$.
In other words, the density ratio estimator and the weighting function for sampling are interdependent and deadlocked.

\begin{table}[t]
    \centering
    \caption{Evaluation results at different step sizes $m$ with $\alpha=3$.
    The evaluation metric is the mean absolute error (MAE) between $\hat{r}$ and $r$. The means and standard deviations of 10 trials are reported.}
    \resizebox{\columnwidth}{!}{
    \begin{tabular}{cccccccc}
        \toprule
         & $m=10$ & $m=20$ & $m=30$ & $m=40$ & $m=50$ & $m=70$ & $m=100$ \\ \midrule
         & $35.41 (\pm 3.55)$ & $34.80 (\pm 3.09)$ & $34.61 (\pm 2.60)$ & $34.32 (\pm 2.54)$ & $34.26 (\pm 2.47)$ & $34.18 (\pm 2.39)$ & $34.13 (\pm 2.34)$\\
         \bottomrule
    \end{tabular}
    }
    \label{tab:gaussian_n_steps}
\end{table}
\begin{figure}[t]
    \centering
    \includegraphics[width=\linewidth]{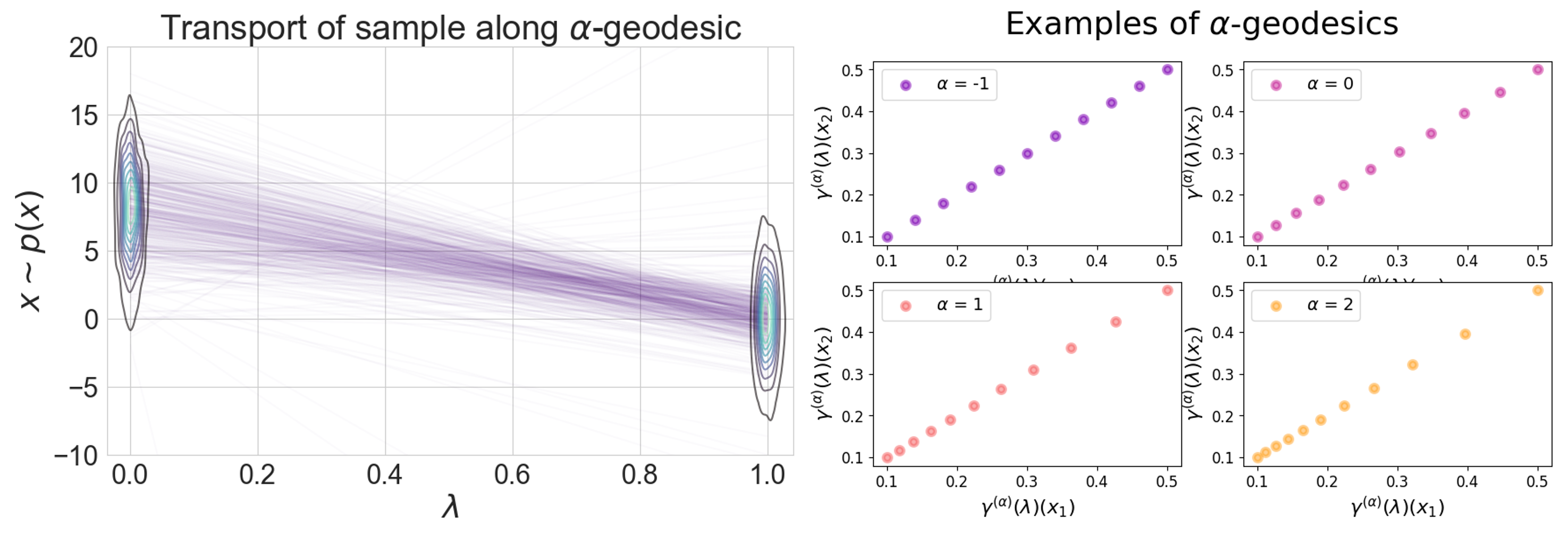}
    \caption{Illustrative examples of GIMDRE convergence. Left panel: an example of sample transport from $p_s$ to $p_t$ along $\alpha$-geodesic. Right panel: examples of $\alpha$-geodesics connecting $p_s(\bm{x})$ and $p_t(\bm{x})$ in two dimensional case.
    Here, we assume that $p_s(\bm{x}) = (0.1, 0.1)$ and $p_t(\bm{x}) = (0.9, 0.9).$}
    \label{fig:toy_example_gaussian}
\end{figure}

To address this problem, we develop the following alternating optimization algorithm.
\begin{itemize}
    \item[i)] The density ratio estimator $\hat{r}(\bm{x})$ is estimated from the samples $\{\bm{x}_i\}^{n_s}_{i=1}$ and $\{\bm{x}_i\}^{n_t}_{i=1}$.
    \item[ii)] Obtained $\hat{r}(\bm{x})$ is used to update the weighting $w_\lambda(\bm{x})$ for the importance sampling.
    \item[iii)] Using the updated weighting $w_\lambda(\bm{x})$, $\hat{r}(\bm{x})$ is estimated by GIMDRE along the $\alpha$-geodesic.
    \item[iv)] Iterate steps ii) and iii) over a predetermined number of times.
\end{itemize}
Figure~\ref{fig:alternating_algorithm} shows an overview of the above algorithm.
The plug-in estimator that underlies our algorithm can be expected to have different statistical behavior depending on two parameters: $\alpha$, which determines the shape of the curve, and $\lambda$, which determines the position on the curve.

\subsection{Equally Spaced Transitions on Generalized Geodesic}
The right panel of Figure~\ref{fig:toy_example_gaussian} visualizes the trajectory approaching from the source point to the target point along several $\alpha$-geodesics (see Section~\ref{sec:numerical_experiments} for more details).
In this figure, we have equally spaced transitions of $\lambda$ from $0$ to $1$ in increments of $0.1$.
As another way of selecting $\lambda$, we can consider selecting equally spaced transitions on the geodesic.
That is, we can choose $\lambda_1,\dots,\lambda_m$ such that the arc lengths $\|\gamma^{(\alpha)}(\lambda_i)\|,\dots,\|\gamma^{(\alpha)}(\lambda_m)\|$ of the $\alpha$-geodesic at time $\lambda_1,\dots,\lambda_m$ are all equal.
Let $\gamma^{(\alpha)}(t)$ be the $\alpha$-geodesic.
The length of this curve connecting $\gamma^{(\alpha)}(0) = p_s$ and $\gamma^{(\alpha)}(1) = p_t$ is defined as
\begin{align}
    \|\gamma^{(\alpha)}\| \coloneqq \int^1_0 \sqrt{\sum_{i,j}g_{ij}\frac{d\gamma^{(\alpha)i}(t)}{dt}\frac{d\gamma^{(\alpha)j}(t)}{dt}},
\end{align}
where $\gamma^{(\alpha)i}$ is the $i$-th component of $\gamma^{(\alpha)}$.
Then, we can see that equally spaced transitions $\lambda_1,\dots,\lambda_m$ on the $\alpha$-geodesic satisfy
\begin{align}
    \int^{\lambda_k}_{\lambda_{k-1}} \sqrt{\sum_{i,j}g_{ij}\frac{d\gamma^{(\alpha)i}(t)}{dt}\frac{d\gamma^{(\alpha)j}(t)}{dt}} = \frac{\|\gamma^{(\alpha)}\|}{m}, \quad 1 \leq k \leq m.
\end{align}
As can be seen from the right panel of Figure~\ref{fig:toy_example_gaussian}, there is a large jump in the approaching distribution for large $\alpha$.
Therefore, adjusting the transition speed of $\lambda$ may be useful in practical applications.

\section{Numerical Experiments}
\label{sec:numerical_experiments}
In this section, we design numerical experiments to investigate the behavior of GIMDRE and report the results.
Our series of numerical experiments consists of two parts: i) illustrative examples to clarify GIMDRE behavior, and ii) experiments on the two-sample test, a major application of density ratio in statistics.
See Appendix~\ref{apd:experiments} for more details on the experiments.

\subsection{Illustrative Examples}
\label{sec:illustrative_examples}
First, we consider illustrative examples to clarify the behavior of GIMDRE.
The purpose of the experiments in this section is to verify the following: does the alternating optimization algorithm for GIMDRE converge and give good estimates, and what geodesics are better to estimate along when varying the sample size, the number of dimensions, and the distance between the two distributions.
We generate samples $\{\bm{x}_i^s\}^{n_s}_{i=1}, \{\bm{x}_i^t\}^{n_t}_{i=1}$ of sample sizes $n_s$ and $n_t$, respectively, from multivariate Gaussian distributions $\mathcal{N}(\bm{\mu}_s, \bm{\Sigma}_s), \mathcal{N}(\bm{\mu}_t, \bm{\Sigma}_t)$ with $\bm{\mu}_s, \bm{\mu}_t \in \mathbb{R}^d$ and $\bm{\Sigma}_s, \bm{\Sigma}_t \in \mathbb{R}^{d\times d}$.
The values of these parameters are specified specifically for each experiment.

\begin{figure}
    \centering
    \includegraphics[width=\linewidth]{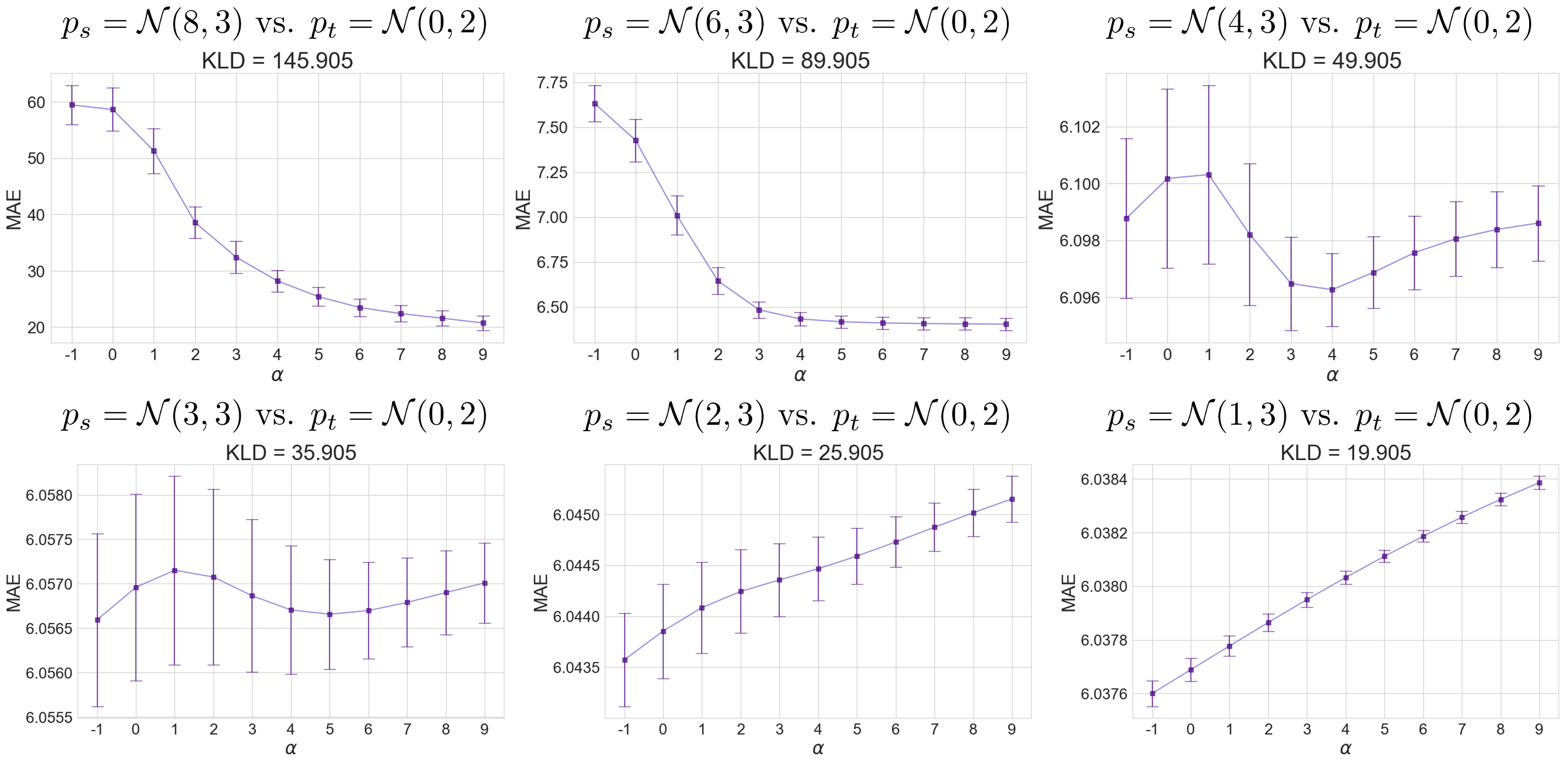}
    \caption{Behavior of GIMDRE according to $\alpha$ when two distributions are brought closer together. KLD is the analytically calculated KL-divergence value}
    \label{fig:alpha_vs_kld}
\end{figure}

First, for illustrative purposes, we consider the case where $\alpha=3$ and the samples follow univariate Gaussian distributions as $p_s = \mathcal{N}(8, 3)$ and $p_t = \mathcal{N}(0, 2)$.
The sample sizes are $n_s = n_t = 500$.
The base method for DRE is based on a probabilistic classifier using logistic regression.
This setup is a typical example where classical DRE fails because the two distributions are far apart.
That is, it is known that classical DRE is difficult unless the means of the two distributions are somewhat close and the variance of the distribution on the denominator side is sufficiently large compared to that on the numerator side.
In fact, the mean absolute error (MAE) obtained by ordinary DRE with the base method is $264.07 (\pm 116)$.
Note that hereafter we will report the mean and standard deviation of the $10$ trials unless otherwise specified.
Table~\ref{tab:gaussian_n_steps} shows the results of GIMDRE evaluations when different step sizes are used.
These results show that even with a small number of steps, there is a significant improvement compared to the base method, and that increasing the number of steps further improves the mean and standard deviation of the evaluation results.
The left panel of Figure~\ref{fig:toy_example_gaussian} also shows an illustration of sample transport using the estimated density ratio $\hat{r}$.
This figure shows that $\hat{r}$ is well able to transport from $p_s = \mathcal{N}(8, 3)$ to $\mathcal{N}(0, 2)$.
In addition, the right panel of Figure~\ref{fig:toy_example_gaussian} shows examples of $\alpha$-geodesics in two dimensional case.
In this figure, $\lambda \in \{0, 0.1, 0.2, 0.3, 0.4, 0.5, 0.6, 0.7, 0.8, 0.9, 1.0\}$ for all $\alpha$.
It can be seen that for $\alpha = -1$, one approaches the target distribution at a uniform rate as the $\lambda$ increases.

Next, Figure~\ref{fig:alpha_vs_kld} shows an illustration of the behavior of GIMDRE according to $\alpha$ when two distributions are brought together.
The parameter $\mu_s \in \{8, 6, 4, 3, 2, 1\}$, and other parameters are fixed as $m=100$, $n = n_s = n_t = 500$, $p_s = \mathcal{N}(\mu_s, 3)$ and $p_t = \mathcal{N}(0, 2)$.
To quantify the discrepancy between the two distributions, we analytically compute the KL-divergence.
This result shows that the use of a large $\alpha$ consistently suppresses the variance of the estimator.
In terms of bias, a large $\alpha$ is effective when the KL-divergence between the two distributions is large, while the effect is reversed as the distributions get closer together.
Tables~\ref{tab:sample_size_alpha} and~\ref{tab:dimension_alpha} show the evaluation results for different sample sizes and sample dimensions.
It can be seen that the degradation of the estimator with decreasing sample size and increasing dimensionality is reduced by using a larger $\alpha$.

\begin{table}[t]
    \caption{Evaluation results with different $\alpha$ and sample sizes $n = n_s = n_t$.}
    \centering
    \resizebox{\columnwidth}{!}{
    \begin{tabular}{lccccc}
        \toprule
                     & $n=100$ & $n=200$ & $n=300$ & $n=400$ & $n=500$ \\
         \midrule
         $\alpha=-1$ & $180.06 (\pm 48.14)$ & $126.48 (\pm 25.52)$ & $80.67 (\pm 16.16)$ & $64.30 (\pm 7.21)$ & $59.88 (\pm 4.50)$\\
         $\alpha=3$  & $76.39 (\pm 20.77) $ & $54.99 (\pm 14.23)$ & $43.60 (\pm 7.15)$ & $36.36 (\pm 3.59)$ & $34.13 (\pm 2.34)$ \\
         $\alpha=7$  & $69.14 (\pm 18.51)$ &  $41.26 (\pm 12.75)$ & $35.33 (\pm 5.62)$ & $26.11 (\pm 2.506)$ & $23.27 (\pm 1.86)$\\
         \bottomrule
    \end{tabular}
    }
    \label{tab:sample_size_alpha}
\end{table}

From the above evaluation results, we can make the following suggestions: i) if the two distributions are close enough, a good estimator can be obtained with a small $\alpha$, and as the two distributions move away from each other, the usefulness of using a large $\alpha$ increases. In particular, a large $\alpha$ consistently leads to a variance reduction in the estimator.

\begin{table}[t]
    \caption{Evaluation results with different $\alpha$ and sample dimensions $d$.}
    \resizebox{\columnwidth}{!}{
    \centering
    \begin{tabular}{lccccc}
        \toprule
                     & $d=2$ & $d=3$ & $d=4$ & $d=5$ & \\
        \midrule
         $\alpha=-1$ & $260.55 (\pm 58.23)$ &  $589.62 (\pm 103.06)$ &  $6030.19 (\pm 4845.87)$ &  $16337.70 (\pm 9379.11)$ & \\
         $\alpha=3$  & $123.83 (\pm 18.23)$ &  $320.65 (\pm 28.31)$ &  $504.50 (\pm 47.42)$ &  $3664.92 (\pm 287.02)$ & \\
         $\alpha=7$  & $122.74 (\pm 16.17)$ &  $331.91 (\pm 24.26)$ &  $692.35 (\pm 44.36)$ & $3792.96 (\pm 283.45)$ & \\
         \bottomrule
    \end{tabular}
    }
    \label{tab:dimension_alpha}
\end{table}

\subsection{Two-Sample Test based on GIMDRE}
\label{sec:two_sample_test}
In statistics, given two sets of samples, testing whether the probability distributions behind the samples are equivalent is a fundamental task.
This problem is referred to as the two-sample test or homogeneity test~\citep{stein1945two,conover1981comparative,gretton2006kernel}.
In our experiments, we utilize the two-sample test based on the permutation test~\citep{efron1993permutation,sugiyama2011least}.
Let $X^s$ and $X^t$ be two samples from $p_s$ and $p_t$.
We first estimate the Pearson divergence using $X^s$ and $X^t$ as
 \begin{align}
    \hat{D}_{\mathrm{PE}}[X^s \| X^t] \coloneqq \frac{1}{2n_s}\sum^{n_s}_{i=1}\hat{r}(\bm{x}^s_i) - \frac{1}{n_t}\sum^{n_t}_{i=1}\hat{r}(\bm{x}^t_i) + \frac{1}{2}.
\end{align}
Next, we randomly permute the $\left|X^s \cup X^t\right|$ samples, and assign the first $n_s$ samples to a set $\tilde{X}$ and the remaining $n_t$ samples to another set $\tilde{X}'$.
Then, we estimate the Pearson divergence again using the randomly shuffled samples $\tilde{X}$ and $\tilde{X}'$, and we obtain an estimate $\hat{D}_{\mathrm{PE}}[\tilde{X} \| \tilde{X}']$.
This random shuffling procedure is repeated $K$ times, and the distribution of $\hat{D}_{\mathrm{PE}}[\tilde{X} \| \tilde{X}']$ under the null hypothesis (that is, the two distributions are the same) is constructed.
In this experiment, we set $K=100$.
Finally, the $p$-value is approximated by evaluating the relative ranking of $\hat{D}_{\mathrm{PE}}[X^s \| X^t]$ in the distribution of $\hat{D}_{\mathrm{PE}}[\tilde{X} \| \tilde{X}']$.
Suppose that $n_s = n_t$, and let $F$ be the distribution function of $\hat{D}_{\mathrm{PE}}[\tilde{X} \| \tilde{X}']$.
Let $\beta \coloneqq \sup \{t \in \mathbb{R} \mid F(t) \leq 1 - \alpha\}$ be the upper $100\alpha$-percentile point of $F$.
If $p_s = p_t$, it was shown that $\mathbb{P}(\hat{D}_{\mathrm{PE}}[X^s \| X^t] > \beta) \leq \alpha$.
That is, for a given significance level $\alpha$, the probability that $\hat{D}[X^s \| X^t]$ exceeds $\beta$ is at most $\alpha$ when $p_s = p_t$.
Thus, when the null hypothesis is correct, it will be properly accepted with a specified probability.
Let $n = n_s = n_t = 500$, and $p_s = \mathcal{N}(0, 1)$.
We consider the setups as $\mu_t \in \{0, 0.5, 1\}$ and $\sigma_t \in \{1, 1.5\}$.
The histogram plots in Figure~\ref{fig:ptest} show the distribution of estimated $\hat{D}_{\mathrm{PE}}[\tilde{X} \| \tilde{X}']$, with red crosses indicating $\hat{D}_{PE}[X^s \| X^t]$.
Also, the line plots show the transition of the $p$-values with increasing sample size in the different settings.
For all the alphas, when $p_s \neq p_t$, the $p$-value gets smaller with each increase in sample size, while if the two distributions are equal, it always produces a large $p$-value.
The figure also shows that the GIMDRE with large $\alpha$ behaves conservatively.

\begin{figure}
    \centering
    \includegraphics[width=\linewidth]{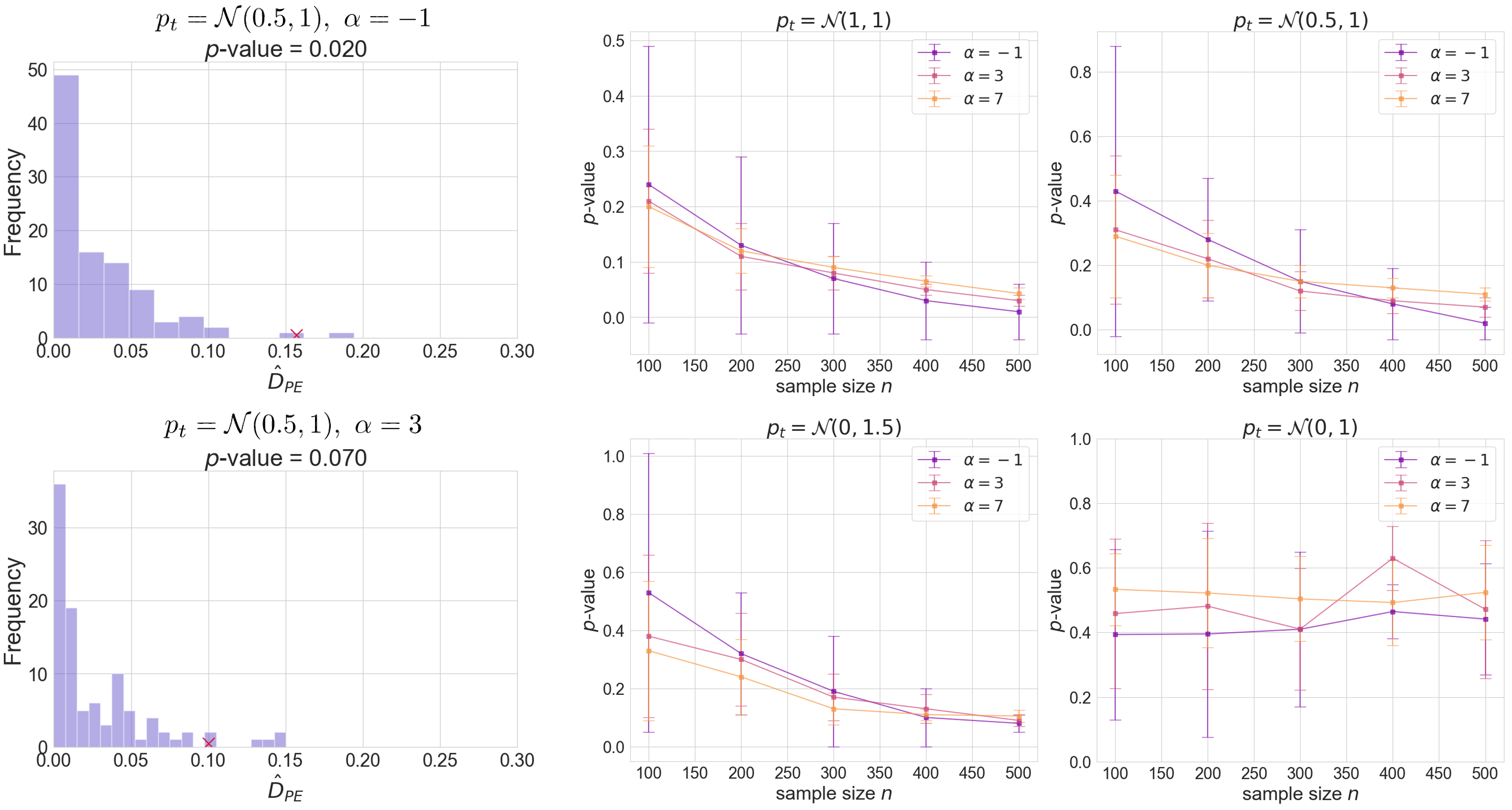}
    \caption{Experimental results of two-sample test. The histogram plots show the distribution of estimated $\hat{D}_{\mathrm{PE}}[\tilde{X} \| \tilde{X}']$, with red crosses indicating $\hat{D}_{PE}[X^s \| X^t]$. The line plots show the transition of the $p$-values with increasing sample size in the different settings.}
    \label{fig:ptest}
\end{figure}

\section{Conclusion and Discussion}
\label{sec:conclusion}
In this paper, we first geometrically reinterpreted the framework of incremental density ratio estimation using a mixture between two distributions.
The effectiveness of such a framework for density ratio estimation has been reported in recent years, and it is very useful to study this behavior in detail.
Our analysis demonstrated that considering what kind of bridge distributions to create is equivalent to choosing what kind of curve on a statistical manifold.
Such a geometrically intuitive interpretation not only reveals properties of the algorithm, but also induces natural generalizations.
Next, we considered density ratio estimation along arbitrary generalized geodesics on the manifold.
This procedure requires sampling from curves on the manifold, called $\alpha$-geodesics, which is non-trivial.
This is because the two probability distributions are generally not given explicitly.
Here we have shown that by using the importance sampling framework, this sampling can be written in a form that depends on the density ratio that we originally wanted to obtain.
This is just the state of interdependence, and we have demonstrated that this deadlock can be resolved with a simple alternating optimization procedure.
We have shown that this strategy works through illustrative examples and experiments on hypothesis testing, an important task in statistics.
Limitlations and broader impacts of this study are discussed in Appendix~\ref{apd:limitations_broader_impact}.
We believe that this study provides useful insights from a geometric perspective into a fundamental task in statistics.


\clearpage
{
\small

}


\clearpage
\appendix
\addcontentsline{toc}{section}{Appendix} 
\part{Appendix} 
\parttoc

\clearpage

\section{Limitations and Broader Impact}
\label{apd:limitations_broader_impact}
In this section, we discuss about both the limitations and the broader impacts of our study.
\subsection{Limitations}
One limitation is that our algorithm does not include a neural network.
Of course, on the other hand, this is a promising extension direction.
Modification of our algorithm to take advantage of the expressive power of deep learning could lead to further advances in this research.
Deep learning models have demonstrated remarkable capabilities in capturing complex patterns and relationships in data, offering a prospect for enhancing the performance and scope of our algorithm.
On the other hand, an existing study reported that the effect of importance-weighted learning by deep learning decays with the number of steps~\citep{byrd2019effect}, so it is not clear whether our method based on importance sampling is straightforwardly extendable or not.
For the application of neural networks to our research, we need to take into account such negative results reported by existing studies, and it seems that we need to devote a large space to this.

\subsection{Broader Impact}
As highlighted in the Introduction section, the density ratio stands out as a fundamental tool in statistics, playing a pivotal role in various fields.
Therefore, while our algorithm itself does not directly produce social impacts, its downstream applications may, depending on how it is used.
One example is the observation in Section~\ref{sec:numerical_experiments}, where the choice of parameters leads to conservative hypothesis tests.
How sensitive a hypothesis test should be depends on whether it is used in a life-threatening setting, such as the medical field, or a place where challenging decisions are allowed, such as marketing~\citep{dudoit2003multiple,lafond2008information,lee2018proper,storey2004strong}.
Our numerical experiments suggest that the choice of parameters can control how sensitive a hypothetical test is constructed, which could have a sufficient impact on society.

\clearpage

\section{Technical Details}
\label{apd:technical_details}
In this section, we provide supplementary technical details, including proofs of the theorems and additional discussion of the ESS.

\subsection{Proofs of Theoretical Results}
\label{apd:proofs}

\begin{proof}[Proof of Theorem~\ref{thm:linear_imdre}]
    Recall that $\alpha$-geodesics between $p$ and $q$ are given as
    \begin{align}
        \gamma^{(\alpha)}(\lambda) = \left\{(1-\lambda)p(\bm{x})^{\frac{1-\alpha}{2}} + \lambda q(\bm{x})^{\frac{1-\alpha}{2}}\right\}^{\frac{2}{1-\alpha}}.
    \end{align}
    Substituting $\alpha=-1$, we see that this is an ordinary weighted average.
    Also, $\alpha$-divergence becomes KL-divergence with $\alpha=-1$, thus obtaining the proof.
\end{proof}

\begin{proof}[Proof of Theorem~\ref{thm:asymptotic_expansion}]
    Considering the following Taylor expansion.
    \begin{align}
        \log\frac{p_s(x)}{\hat{r}(x)p_t(x)} &= \log\frac{r(x)}{\hat{r}(x)} \nonumber \\
        &= -\log\frac{\hat{r}(x)}{r(x)} \nonumber \\
        &= -\left(\frac{\hat{r}(x)}{r(x)} - 1\right) + \frac{1}{2}\left(\frac{\hat{r}(x)}{r(x)} - 1\right)^2 + O_p\left(\left|\frac{\hat{r}(x)}{r(x)} - 1\right|^3\right).
    \end{align}
    Let
    \begin{align}
        J_1 &= \int_{\mathcal{X}}p_s(x)\log\frac{p_s(x)}{\hat{r}(x)p_t(x)}dx - 1, \\
        J_2 &= \int_{\mathcal{X}}p_t(x)\hat{r}(x)dx.
    \end{align}
    Then,
    \begin{align}
        J_1 &= \int_{\mathcal{X}}p(x)\log\frac{p_s(x)}{\hat{r}(x)p_t(x)}dx - 1 \nonumber \\
        &= \int_{\mathcal{X}} \left\{-\left(\frac{\hat{r}(x)}{r(x)} - 1\right) + \frac{1}{2}\left(\frac{\hat{r}(x)}{r(x)} - 1\right)^2\right\}p_s(x)dx - 1 + O(\|\hat{r}/r - 1\|^3)\nonumber \\
        &= \int_{\mathcal{X}}\left\{-\frac{\hat{r}(x)}{r(x)} + 1 + \frac{1}{2}\left(\left(\frac{\hat{r}(x)}{r(x)}\right)^2 - 2\frac{\hat{r}(x)}{r(x)} + 1\right)\right\}p_s(x)dx - \int_{\mathcal{X}}p_s(x)dx + O(\|\hat{r}/r - 1\|^3) \nonumber \\
        &= \int_{\mathcal{X}}\left\{-\frac{\hat{r}(x)}{r(x)} + \frac{1}{2}\left(\left(\frac{\hat{r}(x)}{r(x)}\right)^2 - 2\frac{\hat{r}(x)}{r(x)} + 1\right)\right\}p_s(x)dx + O(\|\hat{r}/r - 1\|^3) \nonumber \\
        &= \int_{\mathcal{X}}\left\{-\frac{\hat{r}(x)}{r(x)} + \frac{1}{2}\frac{\hat{r}(x)^2}{r(x)^2} - \frac{\hat{r}(x)}{r(x)} + \frac{1}{2}\right\}p_s(x)dx + O(\|\hat{r}/r - 1\|^3)\nonumber \\
        &= \int_{\mathcal{X}}\left\{-2\frac{\hat{r}(x)}{r(x)} + \frac{1}{2}\frac{\hat{r}(x)^2}{r(x)^2} + \frac{1}{2}\right\}p_s(x)dx + O(\|\hat{r}/r - 1\|^3) \nonumber \\
        &= \int_{\mathcal{X}}\left\{-2\hat{r}(x) + \frac{1}{2}\frac{\hat{r}(x)^2}{r(x)} + \frac{1}{2}r(x)\right\}\frac{1}{r(x)}p_s(x)dx + O(\|\hat{r}/r - 1\|^3) \nonumber \\
        &= \int_{\mathcal{X}}\left\{-2\hat{r}(x) + \frac{1}{2}\frac{\hat{r}(x)^2p_t(x)}{p_s(x)} + \frac{1}{2}\frac{p_s(x)}{p_t(x)}\right\}p_t(x)dx + O(\|\hat{r}/r - 1\|^3).
    \end{align}
    Here,
    \begin{align}
        \|\hat{r} / r - 1\| \coloneqq \left(\int p_s(x)|\hat{r}(x) / r(x) - 1|^2 dx\right)^{1/2}.
    \end{align}
    We have
    \begin{align}
        D_{\mathrm{KL}}[p_s \| \hat{r}\cdot p_t] &= J_1 + J_2 \nonumber \\
        &= \int_{\mathcal{X}}\left\{-2\hat{r}(x) + \frac{1}{2}\frac{\hat{r}(x)^2p_t(x)}{p_s(x)} + \frac{1}{2}\frac{p_s(x)}{p_t(x)}\right\}p_t(x)dx + \int_{\mathcal{X}} p_t(x)\hat{r}(x)dx \nonumber \\
        &= \int_{\mathcal{X}}\left\{-\hat{r}(x) + \frac{1}{2}\frac{\hat{r}(x)^2p_t(x)}{p_s(x)} + \frac{1}{2}\frac{p_s(x)}{p_t(x)}\right\}p_t(x)dx \nonumber \\
        &= \frac{1}{2}\int_{\mathcal{X}}\left\{-2\hat{r}(x) + \frac{\hat{r}(x)^2p_t(x)}{p_s(x)} + \frac{p_s(x)}{p_t(x)}\right\}p_t(x)dx \nonumber \\
        &= \frac{1}{2}\int_{\mathcal{X}}\frac{1}{p_s(x)}\left\{-2p_s(x)\cdot\hat{r}(x)p_t(x) + \hat{r}(x)^2p_t(x)^2 + p_s(x)^2\right\}dx \nonumber \\
        &= \frac{1}{2}\int_{\mathcal{X}}\frac{(p_s(x) - \hat{r}(x)p_t(x))^2}{p_s(x)}dx + O(\|\hat{r} / r - 1\|^3) \nonumber \\
        &= D_{\mathrm{PE}}[p_s \| \hat{r}(x)p_t(x)] + O(\|\hat{r}/r - 1\|^3).
    \end{align}
    Here, $D_{\mathrm{PE}}[\cdot \| \cdot]$ is the Pearson divergence.
    Hence, the asymptotic expansion of the unnormalized KL-divergence between $p_s(x)$ and $\hat{r}(x)p_t(x)$ is given as
    \begin{align}
        D_{\mathrm{KL}}[p_s \| \hat{r}\cdot p_t] = D_{\mathrm{PE}}[p_s \| \hat{r}\cdot p_t] + O(n^{-3/2}).
    \end{align}
    We consider the following $\alpha$-divergence.
    \begin{align}
        D_\alpha[p_s \| p_t] = \frac{1}{\alpha(\alpha - 1)}\left( 1- \int p_s(x)^\alpha p_t(x)^{1-\alpha}dx\right).
    \end{align}
    From the simple calculation, we can confirm that Pearson divergence is a special case of $\alpha$-divergene with $\alpha = 2$.
    Since $D_{\mathrm{KL}}[p_s \| p_t]$ and $D_\alpha[p_s \| p_t]$ are minimizers of $\alpha$-geodesics with $\alpha=-1$ and $\alpha=2$ between $p_s$ and $p_t$, we have the proof.
\end{proof}

\begin{proof}[Proof of Corollary~\ref{thm:asymptotic_hellinger}]
   We assume that two distributions $p_s$ and $p_t$ are mutually absolutely continuous, and satisfy $0 < c_0 \leq dp_s / dp_t \leq c_1$ on the support of $p_s$ and $p_t$.
   We also assume that for any function $f\colon \to \mathbb{R}$, there exist $\epsilon$ and $\delta$ such that $\mathbb{E}_{p_t}[f(x)] \geq \epsilon, \ \|f\|_\infty \leq \delta$.
   Finally, we assume that for some constants $0 < \zeta < 2$ and $M$, $\sup_{q\in\mathcal{M}}\ln N(\epsilon, F^M, L_2(q))$, where $F$ is a set of finite linear combinations of integrable functions $f$ with positive coefficients bounded above by $M$, and $N(\epsilon, F^M, L_2(q))$ is the $\epsilon$-covering number with $L_2$-distance~\citep{nguyen2007estimating}.
   From the definition of $\zeta_n$,
   \begin{align}
       -\frac{1}{n_s}\sum^{n_2}_{i=1}\ln \hat{r}(\bm{x}^s_i) \leq -\frac{1}{n_s}\sum^{n_s}_{i=1}\ln \frac{r(\bm{x}^s_i)}{\frac{1}{n^t_i}\sum^{n_t}_{i=1}r(\bm{x}^t_i)} + \zeta_n.
   \end{align}
   By convexity of $-\ln x$,
   \begin{align}
       -\frac{1}{n_s}\sum^{n_s}_{i=1}\ln\left(\frac{\hat{r}(\bm{x}^s_i) + r(\bm{x}^s_i)\frac{1}{n^t_i}\sum^{n_t}_{i=1}r(\bm{x}^t_i)}{2r(\bm{x}^s_i)\frac{1}{n^t_i}\sum^{n_t}_{i=1}r(\bm{x}^t_i)}\right) &\leq \frac{\zeta_n}{2}.
   \end{align}
   Let $\xi_1 = \hat{r}(\bm{x}^s_i)$ and $\xi_2 = 2r(\bm{x}^s_i)\frac{1}{n^t_i}\sum^{n_t}_{i=1}r(\bm{x}^t_i)$, and we have
   \begin{align}
       (\hat{p}_t - p_t)(\xi_2 - \xi_1) - (\hat{p}_s - p_s)\ln\frac{\xi_2}{\xi_1} - \frac{\zeta_n}{2} &\leq -p_t(\xi_2 - \xi_1) - p_s\left(\ln\frac{\xi_2}{\xi_1}\right) \nonumber \\
       &\leq 2p_s\left(\sqrt{\frac{\xi_2}{\xi_1}} - 1\right) - p_t(\xi_2 - \xi_1) \nonumber \\
       &= p_t\left(2\sqrt{\xi_2\xi_1} - \xi_2 - \xi_1\right).
   \end{align}
   Here, the last line is the generalized version of Hellinger distance, and it corresponds to $\alpha$-divergence with $\alpha=0$.
   The rest of the proof follows immediately from \citet{sugiyama2008direct}.
\end{proof}

\begin{figure}
    \centering
    \includegraphics[width=\linewidth]{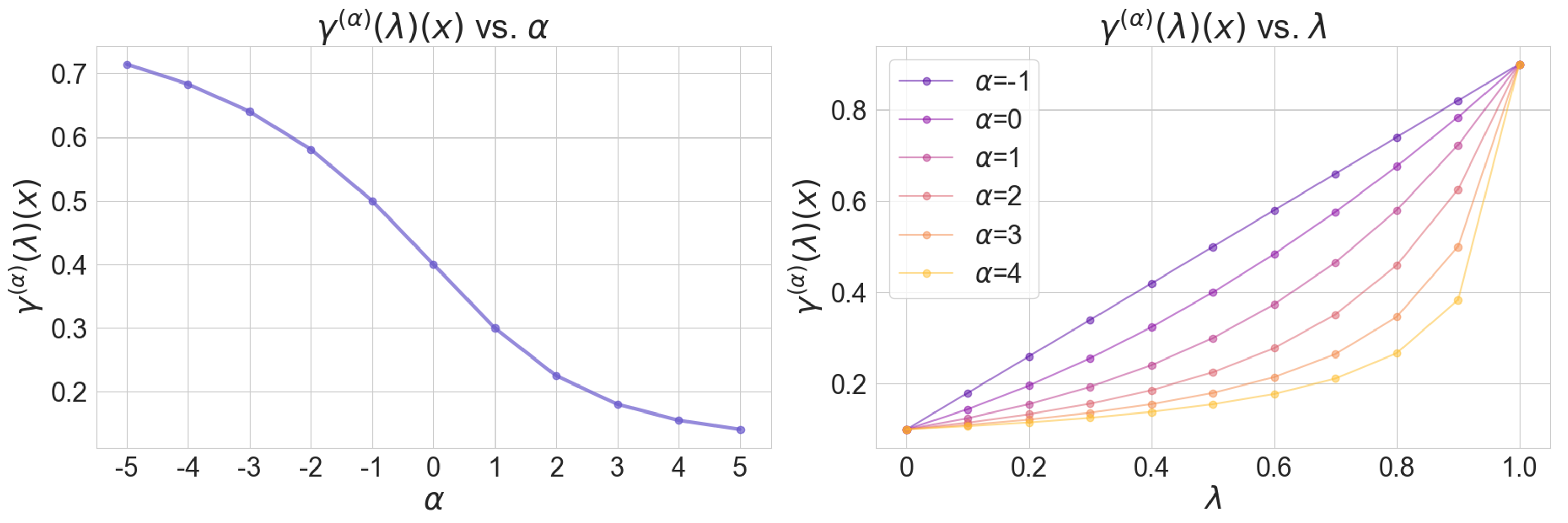}
    \caption{Monotonicity of $\alpha$-geodesics. The left panel shows that the value of the $\alpha$-geodesic decreases monotonically as $\alpha$ is increased. Here, $p_s(\bm{x})=0.1$, $p_t(\bm{x})=0.1$, and $\lambda$ is set to a constant of $0.5$. The right panel shows that the $\alpha$-geodesic approaches $p_s$ at $\lambda\to 0$ and $p_t$ at $\lambda\to 1$ for all $\alpha$.}
    \label{fig:gamma_monotonically}
\end{figure}

\begin{figure}[t]
    \centering
    \includegraphics[width=\linewidth]{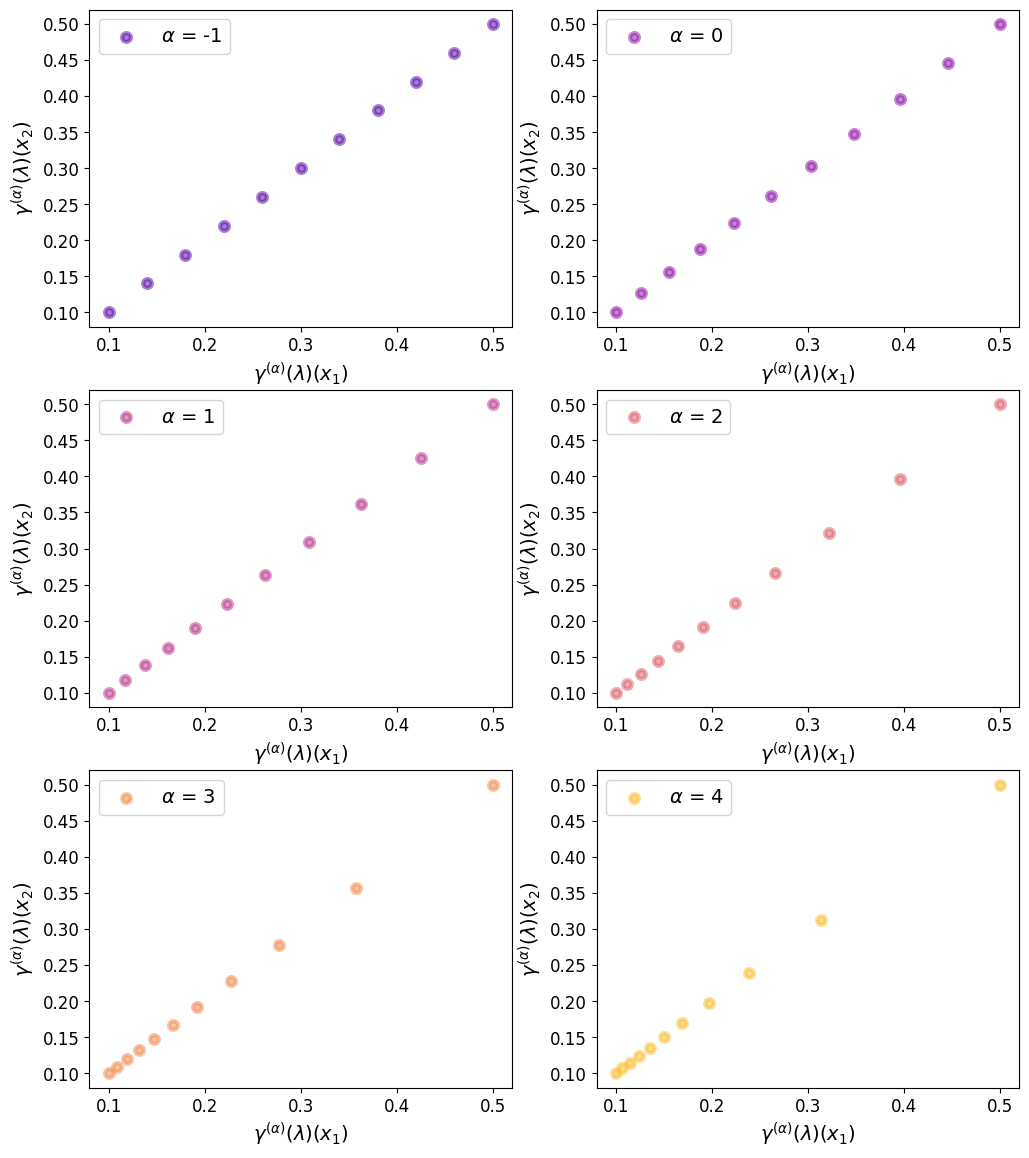}
    \caption{Examples of $\alpha$-geodesics connecting $p_s(\bm{x})$ and $p_t(\bm{x})$ in two dimensional case.
    Here, we assume that $p_s(\bm{x}) = (0.1, 0.1)$ and $p_t(\bm{x}) = (0.9, 0.9).$}
    \label{fig:alpha_geodesics_2d}
\end{figure}

\subsection{Generalized Geodesics and Information Geometry}
\label{apd:summary_of_geometry}
In our manuscript, we have tried to deliver to the reader the usefulness of the geometric interpretation of statistical procedures, while eliminating as much as possible the notations of differential geometry.
However, in order to make our manuscript more self-contained, in this section we introduce a few selected geometric concepts that may help in understanding our work.

Let $g_{ij}$ be a Riemannian metric, particularly the Fisher information matrix for the statistical manifold.
Then the most simple connection on the manifold is defined by the following Christoffel symbols of the first kind.
\begin{align}
    \Gamma_{ij, k} \coloneqq \frac{1}{2}\left(\partial_i g_{jk} + \partial_j g_{ik} - \partial_k g_{ij}\right).
\end{align}
The Levi-Civita connection $\nabla^{(0)}$ is defined as
\begin{align}
    g(\nabla^{(0)}_{\partial_i}\partial_j, \partial_k) = \Gamma_{ij,k}.
\end{align}
Here, the operation $\nabla\colon\mathcal{X}(\mathcal{M})\times\mathcal{X}(\mathcal{M})\to\mathcal{X}(\mathcal{M})$ on a differentiable manifold $\mathcal{M}$ is called a linear connection if it satisfies
\begin{itemize}
    \item[i)] $\nabla_XY$ is $\mathcal{F}(\mathcal{M})$-linear in $X$,
    \item[ii)] $\nabla_XY$ is $\mathbb{R}$-linear in $Y$, and
    \item[iii)] $\nabla$ satisfies the Leibniz rule, that is
    \begin{align}
        \nabla_X(fY) = (Xf)Y + f\nabla_XY, \quad \forall f \in \mathcal{F}(\mathcal{M}),
    \end{align}
\end{itemize}
where $\mathcal{X}(\mathcal{M})$ is the set of vector fields on $\mathcal{M}$ and $\mathcal{F}(\mathcal{M})$ is the set of all differentiable functions on $\mathcal{M}$.
It is worth nothing that the superscript of $\nabla^{(0)}$ denotes a parameter of the connection, and the generalized connection induced by the $\alpha$-divergence is denoted as $\nabla^{(\alpha)}$.
In this case, $\alpha=0$ corresponds to the Levi-Civita connection.
Two other important special cases of the $\alpha$-connection are the $\nabla^{(1)}$- and $\nabla^{(-1)}$-connections, given as follows.
\begin{align}
    g(\nabla^{(1)}_{\partial_i}\partial_j, \partial_k) &= \Gamma^{(1)}_{ij,k} \coloneqq \mathbb{E}\left[(\partial_i\partial_j\ell)(\partial_k\ell)\right], \\
    g(\nabla^{(-1)}_{\partial_i}\partial_j, \partial_k) &= \Gamma^{(-1)}_{ij, k} \coloneqq \mathbb{E}\left[(\partial_i\partial_j\ell + \partial_i\ell\partial_j\ell)(\partial_k\ell)\right].
\end{align}
Note that $\alpha$-divergence with $\alpha=\pm 1$ is the KL-divergence and its dual.
Here, the relationship between $\alpha$-divergence and $\alpha$-connection is that under $\alpha$-connection, the $\alpha$-geodesic given by Eq.~\eqref{eq:alpha_geodesics} becomes a straight path, and $\alpha$-divergence becomes its minimizer.
The $\alpha$-connection is given as
\begin{align}
    g(\nabla^{(\alpha)}_{\partial_i}\partial_j, \partial_k) = \Gamma^{(\alpha)}_{ij,k} \coloneqq \mathbb{E}\left[\left(\partial_i\partial_j\ell + \frac{1 - \alpha}{2}\partial_i\ell\partial_j\ell\right)\partial_k\ell\right],
\end{align}
and
\begin{align}
    \nabla^{(\alpha)} = \frac{1 + \alpha}{2}\nabla^{(1)} + \frac{1 - \alpha}{2}\nabla^{(-1)}.
\end{align}

For example, let us consider the following Gaussian distribution.
\begin{align}
    p(x; \bm{\theta}) = \frac{1}{\sqrt{2\pi\sigma^2}}\exp\left\{-\frac{(x - \mu)^2}{2\sigma^2}\right\},
\end{align}
where $\bm{\theta} = (\theta_1, \theta_2) = (\mu, \sigma)$.
The Fisher information matrix is given by
\begin{align}
    g_{ij} = \begin{pmatrix}
    \frac{1}{\sigma^2} & 0 \\
    0 & \frac{2}{\sigma^2}
    \end{pmatrix}.
\end{align}
Then, from the simple calculation, the Christoffel symbols of first and second kind are given as follows.
\begin{align}
    \Gamma_{11,2} &= \frac{1}{\sigma^3}, \\
    \Gamma_{12,1} &= -\frac{1}{\sigma^3}, \\
    \Gamma_{22,2} &= -\frac{2}{\sigma^3}, \\
    \Gamma^1_{ij} &= \begin{pmatrix}
        0 & -\frac{1}{\sigma} \\
        -\frac{1}{\sigma} & 0
    \end{pmatrix}, \\
    \Gamma^2_{ij} &= \begin{pmatrix}
        \frac{1}{2\sigma} & 0 \\
        0 & -\frac{1}{\sigma}
    \end{pmatrix},
\end{align}
and we can see that the geodesics are solutions of the following ordinary differential equations.
\begin{align}
    \ddot{\mu} - \frac{2}{\sigma}\dot{\mu}\dot{\sigma} &= 0, \\
    \ddot{\sigma} + \frac{1}{2\sigma}(\dot{\mu})^2 - \frac{1}{\sigma}(\dot{\sigma})^2 &= 0.
\end{align}
We can then have
\begin{align}
    \frac{\ddot{\mu}}{\dot{\mu}} = \frac{2\dot{\sigma}}{\sigma} \Leftrightarrow \dot{\mu} = c\sigma^2,
\end{align}
with some constant $c$.
In the case of $c \neq 0$ we have
\begin{align}
    \sigma(s)^2 + (\mu(s) - K)^2\frac{J}{\sigma^4} = c^2,
\end{align}
where $K > 0$ and $J > 0$ are positive constants, and the geodesics are half-ellipses, with $\sigma > 0$.
In the case of $c = 0$,
\begin{align}
    \sigma(s) = \sigma(0)e^{\sqrt{J}s},
\end{align}
and the geodesics are vertical half-lines.
For the manifold of Gaussian distributions, the Christoffel coefficients of first kind are written as
\begin{align}
    \Gamma^{(\alpha)}_{11,1} &= \Gamma^{(\alpha)}_{21,2} = \Gamma^{(\alpha)}_{12,2} = \Gamma^{(\alpha)}_{22,1} = 0, \\
    \Gamma^{(\alpha)}_{11,2} &= \frac{1 - \alpha}{\sigma^3}, \\
    \Gamma^{(\alpha)}_{12,1} &= \Gamma^{(\alpha)}_{21,1} = \frac{1 + \alpha}{\sigma^3}, \\
    \Gamma^{(\alpha)}_{22,2} &= \frac{2(1 + 2\alpha)}{\sigma^3},
\end{align}
and the Christoffel symbols of second kind are as follows.
\begin{align}
    \Gamma^{1 (\alpha)}_{ij} &= g_{11}\Gamma^{(\alpha)}_{ij, 1} + g_{12}\Gamma^{(\alpha)}_{ij,2} = \sigma^2\Gamma^{(\alpha)}_{ij,1} \nonumber \\
    &= \sigma^2\begin{pmatrix}
        0 & -\frac{1 + \alpha}{\sigma^3} \\
        -\frac{1 + \alpha}{\sigma^3} & 0
    \end{pmatrix} = \begin{pmatrix}
        0 & -\frac{1 + \alpha}{\sigma} \\
        -\frac{1 + \alpha}{\sigma} & 0
    \end{pmatrix}.
\end{align}
The ordinary differential equations for the $\alpha$-autoparallel curves are given as
\begin{align}
    \ddot{\mu} - \frac{2(1 + \alpha)}{\sigma}\dot{\sigma}\dot{\mu} &= 0, \\
    \ddot{\sigma} + \frac{1 - \alpha}{2\sigma}\dot{\mu}^2 - \frac{1 + 2\alpha}{\sigma}\dot{\sigma}^2 &= 0.
\end{align}
The first equation can be transformed as
\begin{align}
    \frac{\ddot{\mu}}{\mu} = 2(1 + \alpha)\frac{\dot{\sigma}}{\sigma} &\Leftrightarrow \frac{d}{ds}\ln\dot{\mu} = 2(1 + \alpha)\frac{d}{ds}\ln\sigma \nonumber \\
    &\Leftrightarrow \ln\dot{\mu} = 2(1 + \alpha)\ln\sigma + c_0 \nonumber \\
    &\Leftrightarrow \dot{\mu} = c\sigma^{2(1 + \alpha)},
\end{align}
with constant $c$.
Then, we have
\begin{align}
    \ddot{\sigma} + \frac{1 - \alpha}{2\sigma}c^2\sigma^{4(1 + \alpha)} - \frac{1 + 2\alpha}{\sigma}\dot{\sigma}^2 &= 0 \nonumber \\
    \sigma^{k+1}du + \left(\frac{1 - \alpha}{2}c^2\sigma^{4(\alpha + 1) + k} - (1 + 2\alpha)\sigma^k\mu^2\right)d\sigma &= 0,
\end{align}
where $u = \dot{\sigma}$.
Here, the following must be satisfied.
\begin{align}
    \frac{\partial}{\partial\sigma}\sigma^{k + 1}u = \frac{\partial}{\partial\mu}\left(\frac{1 - \alpha}{2}c^2\sigma^{4(\alpha + 1) + k} - (1 + 2\alpha)\sigma^k\mu^2\right).
\end{align}
By considering this condition, we can determine $k + 1 = -(4\alpha + 2)$, and we have
\begin{align}
    u\sigma^{-(4\alpha + 2)}du + \left(\frac{1 - \alpha}{2}c^2\sigma - (1 + 2\alpha)u^2\sigma^{-(4\alpha + 3)}\right)d\sigma = 0.
\end{align}
Hence, for a constant $E$, we can solve for $\sigma$ as follows.
\begin{align}
    \frac{\mu^2}{\sigma^2(2\alpha + 1)} + \frac{1 - \alpha}{2}c^2\sigma^2 = E &\Leftrightarrow \left(\frac{\dot{\sigma}}{\sigma^{2\alpha + 1}}\right)^2 + \frac{1 - \alpha}{2}c^2\sigma^2 = E \nonumber \\
    &\Leftrightarrow \left(\frac{\dot{\sigma}}{\sigma^{2\alpha + 1}}\right)^2 = E - \frac{1 - \alpha}{2}c^2\sigma^2 \nonumber \\
    &\Leftrightarrow \int\frac{d\sigma}{\sigma^{2\alpha + 1}\sqrt{E - \frac{1 - \alpha}{2}c^2\sigma^2}} = \pm s + s_0 \nonumber \\
    &\Leftrightarrow \int \frac{d\sigma}{\sigma^{2\alpha + 1}\sqrt{C^2 - \sigma^2}} = (\pm s + s_0)\sqrt{\frac{1 - \alpha}{2}}c,
\end{align}
where
\begin{align}
    C = C_\alpha = \frac{2E}{c}\frac{1}{1 - \alpha}.
\end{align}
Let $t = \sigma^2$ and $v = \sqrt{C^2 - t}$, we have
\begin{align}
    \int\frac{d\sigma}{\sigma^{2\alpha + 1}\sqrt{C^2 - \sigma^2}} &= \int\frac{dt}{2\sigma^{2(\alpha + 1)}\sqrt{C^2 - \sigma^2}} \nonumber \\
    &= \int \frac{dt}{2t^{(\alpha + 1)}\sqrt{C^2 - 1}} \nonumber \\
    &= \int\frac{-2vdv}{2t^{\alpha + 1}v} \nonumber \\
    &= -\int\frac{dv}{(C^2 - v^2)^{\alpha + 1}},
\end{align}
and then,
\begin{align}
    -\int\frac{dv}{(C^2 - v^2)^{\alpha + 1}} = (\pm s + s_0)\sqrt{\frac{1 - \alpha}{2}}c.
\end{align}
The $\mu$ is given by
\begin{align}
    \mu = c\int\sigma^{2(1 + \alpha)}(s)ds.
\end{align}
\paragraph{The Case of $\alpha = -1$}
In the case of $\alpha = -1$, we have
\begin{align}
    -v - K = (\pm s + s_0)\sqrt{\frac{1 - \alpha}{2}}c,
\end{align}
with solution
\begin{align}
    \sigma^2(s) = C^2 - \left((\pm s + s_0)\sqrt{\frac{1 - \alpha}{2}}c + K\right)^2,
\end{align}
for a constant $K$.
Then, we have
\begin{align}
    \mu(s) = cs + \mu(0).
\end{align}
\paragraph{The case of $\alpha=1/2$}
In the case of $\alpha = 1/2$, since
\begin{align}
    \int\frac{dv}{(C^2 - v^2)^{3/2}} = \frac{v}{C^2\sqrt{C^2 - v^2}},
\end{align}
and we have
\begin{align}
    -\frac{v}{C^2\sqrt{C^2 - v^2}} = (\pm s + s_0)\frac{c}{2} + K,
\end{align}
with solution
\begin{align}
    \sigma(s) = \frac{C}{\sqrt{1 + C^4\left(\frac{c}{2}(\pm s + S_0) + K\right)^2}},
\end{align}
for a constant $K$.
The, we have
\begin{align}
    \mu(s) = c\int \sigma^3(s) ds.
\end{align}

\subsection{Approximation of Arc Lengths of Generalized Geodesics for Equally Spaced Transitions}
\label{apd:approximation_arc_length}
Let $\gamma^{(\alpha)}(t)$ be the $\alpha$-geodesic, and we consider its Taylor expansion as
\begin{align}
    \gamma^{(\alpha)i}(t) = \gamma^{(\alpha)i}(0) + t\frac{d\gamma^{(\alpha)i}}{dt}\Bigr|_{t=0} + \frac{t^2}{2}\frac{d^2\gamma^{(\alpha)i}}{dt^2}\Bigr|_{t=0}.
\end{align}
Here, $\alpha$-geodesics satisfy the following Euler-Lagrange equation.
\begin{align}
    \frac{d^2\gamma^{(\alpha)i}}{dt^2} + \Gamma^{(\alpha)i}_{jk}\frac{d\gamma^{(\alpha)j}}{dt}\frac{d\gamma^{(\alpha)k}}{dt} = 0,
\end{align}
where
\begin{align}
    \Gamma^{(\alpha)}_{ij,k} = \mathbb{E}\left[\left(\partial_i\partial_j\ell + \frac{1-\alpha}{2}\partial_i\ell\partial_j\ell\right)\partial_k\ell\right],
\end{align}
and $\ell = \ln p(x)$.
Then, we can rewrite as
\begin{align}
    \gamma^{(\alpha)i}(t) = \gamma^{(\alpha)i}(0) + t\frac{d\gamma^{(\alpha)i}(t)}{dt}\Bigr|_{t=0} - \frac{t^2}{2}\left(\Gamma^{(\alpha)i}_{jk}\frac{d\gamma^{(\alpha)j}}{dt}\frac{d\gamma^{(\alpha)k}}{dt}\right)\Bigr|_{t=0}.
\end{align}
The length of this curve connecting $\gamma^{(\alpha)}(0) = p_s$ and $\gamma^{(\alpha)}(1) = p_t$ is defined as
\begin{align}
    \|\gamma^{(\alpha)}\| \coloneqq \int^1_0 \sqrt{\sum_{i,j}g_{ij}\frac{d\gamma^{(\alpha)i}(t)}{dt}\frac{d\gamma^{(\alpha)j}(t)}{dt}}.
\end{align}
Finally, we can see that equally spaced transitions $\lambda_1,\dots,\lambda_m$ on the $\alpha$-geodesic satisfies
\begin{align}
    \int^{\lambda_i}_{\lambda_{i-1}} \sqrt{\sum_{i,j}g_{ij}\frac{d\gamma^{(\alpha)i}(t)}{dt}\frac{d\gamma^{(\alpha)j}(t)}{dt}} = \int^{\lambda_j}_{\lambda_{j-1}} \sqrt{\sum_{i,j}g_{ij}\frac{d\gamma^{(\alpha)i}(t)}{dt}\frac{d\gamma^{(\alpha)j}(t)}{dt}}, \quad 1 \leq i, j \leq m.
\end{align}

\begin{figure}
    \centering
    \includegraphics[width=\linewidth]{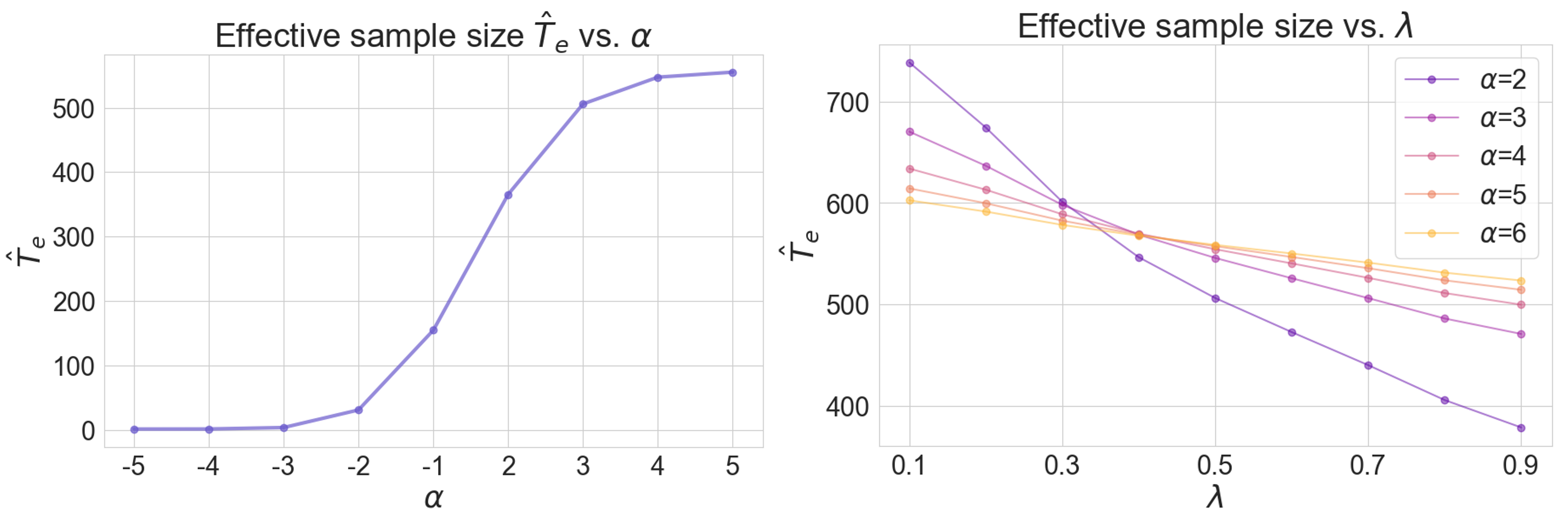}
    \caption{Relationship between ESS and the parameters $\alpha$ and $\lambda$.}
    \label{fig:effective_sample_size}
\end{figure}

\subsection{Additional Discussion on Variance and Effective Sample Size}
We provide some insight into the question about the choice of $\alpha$ through the following analysis.
\begin{proposition}
    \label{prop:variance_gimdre}
    For the estimator $\hat{I}$, we can see that the variance $\mathrm{Var}(\hat{I})$ decreases as $\alpha \to +\infty$.
\end{proposition}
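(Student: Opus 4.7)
The plan is to express the importance weight $w_\lambda(\bm x)$ as a weighted power mean of $1$ and $1/r(\bm x)$ and use the classical monotonicity of power means in the exponent. Setting $s=(1-\alpha)/2$ and $u(\bm x)=1/r(\bm x)$, the weight satisfies
$$
w_\lambda(\bm x)=\bigl((1-\lambda)\cdot 1^{s}+\lambda\cdot u(\bm x)^{s}\bigr)^{1/s}=M_s^{(1-\lambda,\lambda)}\bigl(1,u(\bm x)\bigr).
$$
The power-mean inequality states that $s\mapsto M_s^{(1-\lambda,\lambda)}(1,u)$ is monotonically non-decreasing for every fixed $u>0$, so since $\alpha\to+\infty$ corresponds to $s\to-\infty$, the weight $w_\lambda(\bm x)$ is pointwise non-increasing in $\alpha$ with limiting value $\min(1,u(\bm x))\le 1$.

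Next, I would transfer this pointwise monotonicity to the variance via the standard delta-method expansion for the self-normalized importance sampling estimator,
$$
T\cdot\mathrm{Var}(\hat I)=\frac{\mathbb E_{\pi}\bigl[w_\lambda(\bm x)^{2}(g(\bm x)-I)^{2}\bigr]}{\mathbb E_{\pi}[w_\lambda(\bm x)]^{2}}+o(1),
$$
with proxy $\pi=p_s$. The first step shows that $w_\lambda(\bm x)^{2}(g(\bm x)-I)^{2}$ is pointwise non-increasing in $\alpha$ and is eventually dominated by an integrable function; dominated convergence then forces the numerator to decrease monotonically in $\alpha$. Meanwhile, $\mathbb E_\pi[w_\lambda]$ stays bounded away from zero since its large-$\alpha$ limit equals $\mathbb E_{p_s}[\min(1,p_t/p_s)]=\int\min(p_s,p_t)\,d\bm x>0$ under the assumption $p_t>0$.

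The main obstacle I anticipate is the denominator: a pointwise decrease of $w_\lambda$ does not by itself imply monotonicity of the ratio of moments, because $\mathbb E_\pi[w_\lambda]$ also shrinks with $\alpha$. I would resolve this by differentiating the variance in $\alpha$ using the closed-form expression for $\partial_\alpha w_\lambda$ furnished by the power-mean representation and verifying that
$$
\partial_\alpha\left(\frac{\mathbb E_\pi[w_\lambda^{2}(g-I)^{2}]}{\mathbb E_\pi[w_\lambda]^{2}}\right)\le 0.
$$
An alternative, if the direct derivative bound proves unwieldy, is to upper-bound the variance by the chi-squared divergence $\chi^{2}\bigl(\gamma^{(\alpha)}(\lambda)\,\|\,p_s\bigr)=\mathbb E_\pi[w_\lambda^{2}]-1$, which by the very same power-mean monotonicity is non-increasing in $\alpha$; this already yields a clean dominating bound on $\mathrm{Var}(\hat I)$ that itself decreases to its $\alpha\to+\infty$ limit, which is sufficient for the stated claim.
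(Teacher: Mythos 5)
Your power-mean step is correct and actually supplies a justification that the paper leaves implicit: writing $w_\lambda(\bm{x}) = M_s^{(1-\lambda,\lambda)}(1, 1/r(\bm{x}))$ with $s=(1-\alpha)/2$ and invoking monotonicity of power means in the exponent proves both the pointwise inequality $\frac{\partial}{\partial\alpha}w_\lambda(\bm{x})\le 0$ and the limit $\min\{1,p_t(\bm{x})/p_s(\bm{x})\}$, which the paper merely asserts. The gap is in where you take this. The paper's proof analyzes the \emph{unnormalized} importance-sampling estimator: it reduces the variance to $\int w_\lambda(\bm{x})^2 g(\bm{x})^2 p_s(\bm{x})\,d\bm{x} - I^2$ and concludes directly from the pointwise decrease of $w_\lambda$ that the leading integral decreases. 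You instead target the delta-method variance of the \emph{self-normalized} estimator, $\mathbb{E}_{p_s}[w_\lambda^2(g-I)^2]/\mathbb{E}_{p_s}[w_\lambda]^2$, and you correctly flag that pointwise monotonicity of $w_\lambda$ does not control this ratio because the denominator shrinks too --- but neither of your proposed fixes closes that gap. The direct-differentiation route cannot succeed from pointwise monotonicity alone: the ratio is invariant under the rescaling $w_\lambda\mapsto c\,w_\lambda$ (and so is the self-normalized $I$), so if the family behaved like $w_\alpha = c(\alpha)w_0$ with $c$ decreasing, your hypothesis would hold while the variance stayed constant. What is actually needed is control of the \emph{shape} of $w_\lambda$ --- its coefficient of variation under $p_s$ --- which is a strictly stronger statement (it is precisely what the paper's ESS discussion, Proposition~\ref{prop:ess}, turns on) and is not established by your argument. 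The fallback bound via $\mathbb{E}_{p_s}[w_\lambda^2]-1$ also does not work: the geodesic is unnormalized, so $\mathbb{E}_{p_s}[w_\lambda]\neq 1$ and this quantity is not the $\chi^2$ divergence; and even granting the identity, a decreasing numerator says nothing about the ratio when $\mathbb{E}_{p_s}[w_\lambda]^2$ is also decreasing.

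To repair the argument with minimal change, either adopt the paper's target quantity --- the unnormalized second moment $\mathbb{E}_{p_s}[w_\lambda^2 g^2]$, for which your power-mean monotonicity together with monotone or dominated convergence immediately yields the decrease (modulo the $-I^2$ term, whose own $\alpha$-dependence the paper also glosses over) --- or prove the genuinely stronger claim that the coefficient of variation of $w_\lambda$ under $p_s$ is non-increasing in $\alpha$, which would cover the self-normalized case and prove Proposition~\ref{prop:ess} at the same time.
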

\begin{proof}
    The variance of $\hat{I}$ with $p_\alpha^\lambda$ is calculated as follows.
    \begin{align}
        \mathrm{Var}_{\alpha,\lambda}(\hat{I}) &= \int\left(\frac{\gamma^{(\alpha)}(\lambda)(x)}{p_s(x)}g(x) - I\right)^2 p_s(x)dx \nonumber \\
        &= \int\left(\frac{\gamma^{(\alpha)}(\lambda)(x)^2}{p_s(x)^2}g(x)^2 - 2\frac{\gamma(\alpha)(\lambda)(x)}{p_s(x)}g(x)I + I^2\right) p_s(x) dx \nonumber \\
        &= \int \frac{\gamma^{(\alpha)}(\lambda)(x)^2}{p_s(x)}g(x)^2dx - I^2 \nonumber \\
        &= \int\frac{\gamma^{(\alpha)}(\lambda)(x)}{p_s(x)}g(x)^2 \gamma^{(\alpha)}(\lambda)(x)dx - I^2 \nonumber \\
        &= \mathbb{E}_{\gamma^{(\alpha)}(\lambda)}\left[\frac{\gamma^{(\alpha)}(\lambda)(x)}{p_s(x)}g(x)^2\right] - I^2 \nonumber \\
    \end{align}
    Thus, the variance of the estimator by sampling from the alpha-mixture depends on the expectation of $g(x)$ scaled by the weight $w(x) = \gamma^{(\alpha)}(\lambda)(x) / p_s(x)$.
    Here, for all $\alpha$, we have $\frac{\partial}{\partial \alpha}\frac{\gamma^{(\alpha)}(\lambda)(x)}{p_s(x)} \leq 0$, and then we have the proof. 
\end{proof}
The above proof is based on the monotonicity of $\alpha$-geodesics with respect to $\alpha$.
Therefore, we give a few more observations on this important key monotonicity.
Figure~\ref{fig:gamma_monotonically} shows the monotonicity of the $\alpha$-geodesic.
We use $p_s=0.1$ and $p_t=0.9$.
The left panel shows that the value of the $\alpha$-geodesic decreases monotonically as $\alpha$ is increased. Here, $\lambda$ is set to a constant of $0.5$.
Recall that $\alpha=-1$ is an ordinary weighted average, and we can see that the geodesic value is $(p_s(\bm{x}) + p_t(\bm{x})) / 2$ at $\alpha=-1$ in the figure.
The right panel shows that the $\alpha$-geodesic approaches $p_s$ at $\lambda\to 0$ and $p_t$ at $\lambda\to 1$ for all $\alpha$.
Further, for larger values of $\alpha$, the speed at which the $\alpha$-geodesic approaches $p_t$ is found to be non-uniform.
This behavior, in which the geodesic gradually approaches the target distribution from the source distribution, can be expected to affect the stability of the estimator.
In addition, Figure~\ref{fig:alpha_geodesics_2d} shows the examples of $\alpha$-geodesics connecting $p_s(\bm{x})$ and $p_t(\bm{x})$ in two dimensional case.
Here, we assume that $p_s(\bm{x}) = (0.1, 0.1)$ and $p_t(\bm{x}) = (0.9, 0.9).$
In this figure, $\lambda \in \{0, 0.1, 0.2, 0.3, 0.4, 0.5, 0.6, 0.7, 0.8, 0.9, 1.0\}$ for all $\alpha$.
It can be seen that for $\alpha = -1$, one approaches the target distribution at a uniform rate as the $\lambda$ increases, while for larger values of $\alpha$, there is an acceleration in approaching the target distribution.
The above results follow from the monotonicity of $\alpha$-geodesics with respect to $\alpha$.
Note that, for arbitrary $\lambda \in [0, 1]$,
    $\lim_{\alpha\to+\infty}w(\bm{x}) = \min\left\{1, p_t(\bm{x}) / p_s(\bm{x}) \right\}$.
This can be seen as a truncation of the density ratio by $1$ in the case of $\alpha\to +\infty$.

Finally, the choice of the geodesic parameter $\alpha$ is discussed in more detail.
The appropriate parameter $\alpha$ for sampling from $\alpha$-geodesics depends on $g(\bm{x})$, the target of the Monte Carlo integration.
Let $\bar{Z}_{\alpha} \coloneqq \sum^T_{i=1} w(\bm{x}^*)\bm{x}_i / \sum^T_{j=1}w(\bm{x}_j)$, this variance is given by
\begin{align}
    \mathrm{Var}(\bar{Z}_{\alpha}) = \mathrm{Var}(\bm{x})\frac{\sum^T_{i=1}w(\bm{x})^2}{\left(\sum^T_{i=1}w(\bm{x})\right)^2} = \mathrm{Var}(\bm{x})\frac{\sum^T_{i=1}\left\{1 - \lambda + \lambda\left(1 / \hat{r}(\bm{x}^s_i)\right)^{\frac{1-\alpha}{2}}\right\}^{\frac{4}{1-\alpha}}}{\left(\sum^T_{i=1}\left\{1 - \lambda + \lambda\left(1 / \hat{r}(\bm{x}^s_i)\right)^{\frac{1-\alpha}{2}}\right\}^{\frac{2}{1-\alpha}}\right)^2}.
\end{align}
On the other hand, the variance of $\bar{Z} = \frac{1}{T_e}\sum^{T_e}_{i=1}\bm{x}_i$ is $\mathrm{Var}(\bar{Z}) = \mathrm{Var}(\bm{x}) / T_e$ for some $T_e > 0$.
Then, let $\mathrm{Var}(\bar{Z}_\alpha) = \mathrm{Var}(\bar{Z})$ and solve for $T_e$ to obtain the following.
\begin{align}
    \hat{T}_e = \frac{\left(\sum^T_{i=1}w(\bm{x})\right)^2}{\sum^T_{i=1}w(\bm{x})^2} = \frac{\left(\sum^T_{i=1}\left\{1 - \lambda + \lambda\left(1 / \hat{r}(\bm{x}^s_i)\right)^{\frac{1-\alpha}{2}}\right\}^{\frac{2}{1-\alpha}}\right)^2}{\sum^T_{i=1}\left\{1 - \lambda + \lambda\left(1 / \hat{r}(\bm{x}^s_i)\right)^{\frac{1-\alpha}{2}}\right\}^{\frac{4}{1-\alpha}}}.
\end{align}
$\hat{T}_e$ represents the number of data required to obtain the same accuracy as estimation by $\bar{Z}_\alpha$ when estimating the expected value of a sample by $\bar{Z}$.
It is called the effective sample size (ESS).
ESS holds for $\hat{T}_{e} \leq T$, and the equality holds when w is a constant.
When the value of $\hat{T}_e$ is small, estimation by $\bar{Z}_\alpha$ requires more samples than $\bar{Z}$, meaning that estimation is inefficient.
From the monotonicity of the $\alpha$-geodesic with respect to $\alpha$, we obtain the following.
\begin{proposition}
    \label{prop:ess}
    The estimation efficiency in terms of ESS of GIMDRE improves with $\alpha\to +\infty$.
\end{proposition}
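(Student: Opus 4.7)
The plan is to view $\hat{T}_e$ as a functional of the weight function $w_\alpha(\bm{x}) = \gamma^{(\alpha)}(\lambda)(\bm{x})/p_s(\bm{x}) = \{1-\lambda + \lambda r(\bm{x})^{-(1-\alpha)/2}\}^{2/(1-\alpha)}$ and to show that its efficiency-measuring ratio is maximized as $\alpha \to +\infty$. First I would pass to the asymptotic regime using the law of large numbers, writing
$$\frac{\hat{T}_e}{T} \;\longrightarrow\; R(\alpha) \;\coloneqq\; \frac{\mathbb{E}_{p_s}[w_\alpha]^2}{\mathbb{E}_{p_s}[w_\alpha^2]},$$
and record that $R(\alpha)$ is invariant under uniform rescaling $w_\alpha \mapsto c\, w_\alpha$, so only the \emph{relative dispersion} of the weights matters, not their absolute magnitude. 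This is important because the monotonicity $\partial_\alpha w_\alpha \leq 0$ used in Proposition~\ref{prop:variance_gimdre} by itself says nothing about $R(\alpha)$.

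Next I would carry out the pointwise analysis of $w_\alpha(\bm{x})$ as $\alpha\to +\infty$. Setting $\beta = (\alpha-1)/2$ and doing a three-way case split on whether $r(\bm{x})>1$, $r(\bm{x})=1$, or $r(\bm{x})<1$, one recovers the limit $w_\alpha(\bm{x})\to \min\{1,1/r(\bm{x})\}$ already noted in the text. The crucial point is that this limit is uniformly bounded in $[0,1]$, whereas for finite $\alpha$ the weights are unbounded; in particular for $\alpha=-1$ one has $w_{-1}(\bm{x}) = 1-\lambda + \lambda/r(\bm{x})$, which blows up on $\{r \to 0\}$. It is precisely this heavy right tail that inflates $\mathbb{E}[w_\alpha^2]$ relative to $\mathbb{E}[w_\alpha]^2$ and depresses $R(\alpha)$; as $\alpha$ grows, the family $\{f_\alpha(r)\}$ progressively truncates this tail at height $1$.

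From here, a dominated-convergence argument identifies $\lim_{\alpha\to+\infty} R(\alpha) = \mathbb{E}_{p_s}[\min\{1,1/r\}]^2 / \mathbb{E}_{p_s}[\min\{1,1/r\}^2]$, and a Cauchy--Schwarz comparison between a bounded weight and any weight with the same mean but a fatter tail shows that this limiting value dominates $R(\alpha)$ for $\alpha$'s at which $w_\alpha$ has a heavy tail. This yields the intended statement that the estimation efficiency is asymptotically optimal as $\alpha \to +\infty$.

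The main obstacle is upgrading this limiting statement to pointwise monotonicity of $R(\alpha)$ in $\alpha$. Because $R$ is scale-invariant, pointwise monotonicity of $w_\alpha$ is not enough; one would need to differentiate $R(\alpha)$ and show $R'(\alpha)\geq 0$, which requires controlling $\mathrm{Cov}_{p_s}(w_\alpha, \partial_\alpha w_\alpha)$ against $\mathrm{Cov}_{p_s}(w_\alpha^2,\partial_\alpha \log w_\alpha)$ — a genuinely nontrivial inequality that I do not see reducing to the $\alpha$-geodesic monotonicity alone. I would therefore present the proposition in its weaker (and I believe intended) reading: as $\alpha\to +\infty$, the weights are truncated to $\min\{1,1/r\}$, so the asymptotic efficiency $R(\alpha)$ attains its optimal value in that limit.
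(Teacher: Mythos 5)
Your proposal follows the same skeleton as the paper's treatment: both reduce the claim to the normalized second moment of the importance weights (the paper writes $\hat{T}_e = T/(1+V^2)$ with $V$ the coefficient of variation, which is exactly your $R(\alpha)=\mathbb{E}_{p_s}[w_\alpha]^2/\mathbb{E}_{p_s}[w_\alpha^2]$ up to the finite-sample/population distinction), and both lean on the truncation limit $w_\alpha(\bm{x})\to\min\{1,1/r(\bm{x})\}$. Where you genuinely add something is in observing that $R$ is invariant under uniform rescaling of the weights, so the pointwise monotonicity $\partial_\alpha w_\alpha\le 0$ invoked in Proposition~\ref{prop:variance_gimdre} cannot by itself yield monotonicity of the ESS; the paper's argument ("From the monotonicity of the $\alpha$-geodesic\dots") elides exactly this point, and supports the proposition only with the trivial bound $V^2\ge 0$ (which gives $\hat{T}_e\le T$, not monotonicity in $\alpha$) together with the numerical evidence in Figure~\ref{fig:effective_sample_size}. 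Your retreat to the limiting statement via dominated convergence is therefore the defensible core of the proposition, and arguably a more honest reading than the paper's. Two caveats: first, your ``Cauchy--Schwarz comparison between a bounded weight and any weight with the same mean but a fatter tail'' is not yet a proof step --- the weights do not have the same mean across $\alpha$, and ``fatter tail'' would need to be formalized as a convex-order or majorization relation after normalizing each $w_\alpha$ by its mean before it controls $\mathbb{E}[w_\alpha^2]/\mathbb{E}[w_\alpha]^2$; second, the genuine monotonicity of $R(\alpha)$ in $\alpha$ (which is what ``improves with $\alpha\to+\infty$'' most naturally asserts) is established neither by you nor by the paper, so the gap you identify is real and is inherited from the source.
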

Recall that ESS $\hat{T}_e$ is given as
\begin{align}
    \hat{T}_e = \frac{\left(\sum^T_{i=1}w(\bm{x})\right)^2}{\sum^T_{i=1}w(\bm{x})^2} = \frac{\left(\sum^T_{i=1}\left\{1 - \lambda + \lambda\left(1 / \hat{r}(\bm{x}^s_i)\right)^{\frac{1-\alpha}{2}}\right\}^{\frac{2}{1-\alpha}}\right)^2}{\sum^T_{i=1}\left\{1 - \lambda + \lambda\left(1 / \hat{r}(\bm{x}^s_i)\right)^{\frac{1-\alpha}{2}}\right\}^{\frac{4}{1-\alpha}}}.
\end{align}
Here, we can rewrite this as follows.
\begin{align}
    \hat{T}_e &= \frac{T}{1 + \frac{\frac{1}{T}\sum^T_{i=1}\left\{1 - \lambda + \lambda\left(1 / \hat{r}(\bm{x}^s_i)\right)^{\frac{1-\alpha}{2}}\right\}^{\frac{4}{1-\alpha}} - \left(\frac{1}{T}\sum^T_{i=1}\left\{1 - \lambda + \lambda\left(1 / \hat{r}(\bm{x}^s_i)\right)^{\frac{1-\alpha}{2}}\right\}^{\frac{2}{1-\alpha}}\right)^2}{\left(\frac{1}{T}\sum^T_{i=1}\left\{1 - \lambda + \lambda\left(1 / \hat{r}(\bm{x}^s_i)\right)^{\frac{1-\alpha}{2}}\right\}^{\frac{2}{1-\alpha}}\right)^2}} \nonumber \\
    &= \frac{T}{1 + V^2},
\end{align}
where
\begin{align}
    V \coloneqq \frac{\frac{1}{T}\sum^T_{i=1}\left\{1 - \lambda + \lambda\left(1 / \hat{r}(\bm{x}^s_i)\right)^{\frac{1-\alpha}{2}}\right\}^{\frac{4}{1-\alpha}} - \left(\frac{1}{T}\sum^T_{i=1}\left\{1 - \lambda + \lambda\left(1 / \hat{r}(\bm{x}^s_i)\right)^{\frac{1-\alpha}{2}}\right\}^{\frac{2}{1-\alpha}}\right)^2}{\left(\frac{1}{T}\sum^T_{i=1}\left\{1 - \lambda + \lambda\left(1 / \hat{r}(\bm{x}^s_i)\right)^{\frac{1-\alpha}{2}}\right\}^{\frac{2}{1-\alpha}}\right)^2}.
\end{align}
The value $V$ is called the coefficient of variation, and from the fact that $V^2 \geq 0$, we can see that $\hat{T}_e \leq T$.
Figure~\ref{fig:effective_sample_size} shows the relationship between ESS and the parameters $\alpha$ and $\lambda$.
In this figure, $p_s=\mathcal{N}(8, 3)$ and $p_t=\mathcal{N}(0, 2)$.
The left panel of Figure~\ref{fig:effective_sample_size} shows the relationship between $\alpha$ and ESS with $\lambda=0.5$ as a constant.
We see that the estimation efficiency in terms of the ESS is improved by using a large $\alpha$.
Furthermore, the right panel of Figure~\ref{fig:effective_sample_size} shows that the estimation efficiency deteriorates as one moves away from the source distribution with larger $\lambda$.
However, it can be observed that this degradation can be reduced by using a large $\alpha$.
Figures~\ref{fig:ess_lognormal} and~\ref{fig:ess_powerlaw} show the results for the two skewed distributions, Log-normal and Power-law, and reveal similar observations.
Here, the density functions of these two distributions $p_{\mathrm{LogNormal}}$ and $p_{\mathrm{PowerLaw}}$ are defined as follows.
\begin{align}
    p_{\mathrm{LogNormal}}(x; \mu, \sigma) &= \frac{1}{x\sigma\sqrt{2\pi}}\exp\left\{-\frac{(\ln x - \mu)^2}{2\sigma^2}\right\}, \quad x \in (0, +\infty), \ \mu \in (-\infty, +\infty),\ \sigma > 0, \nonumber \\
    p_{\mathrm{PowerLaw}}(x; a) &= ax^{a - 1}, \quad 0 \leq 1,\ a > 0. \nonumber
\end{align}
For Log-normal distributions $p_s = \mathrm{LogNormal}(\mu_s, \sigma_s)$ and $\mathrm{LogNormal}(p_t, \sigma_t)$, we use $\mu_s \in \{3, 4\}, \sigma_s = 0.5$, $\mu_t =0$ and $\sigma_t \in \{1.5, 2, 2.5\}$.
Also, for Power-law distributions $p_s = \mathrm{PowerLaw}(a_s)$ and $p_t = \mathrm{PowerLaw}(a_t)$, we use $a_s = 3$ and $a_t \in \{0.05, 0.1, 0.15, 0.2, 0.25, 0.3\}$.

\begin{figure}[t]
    \centering
    \includegraphics[width=\linewidth]{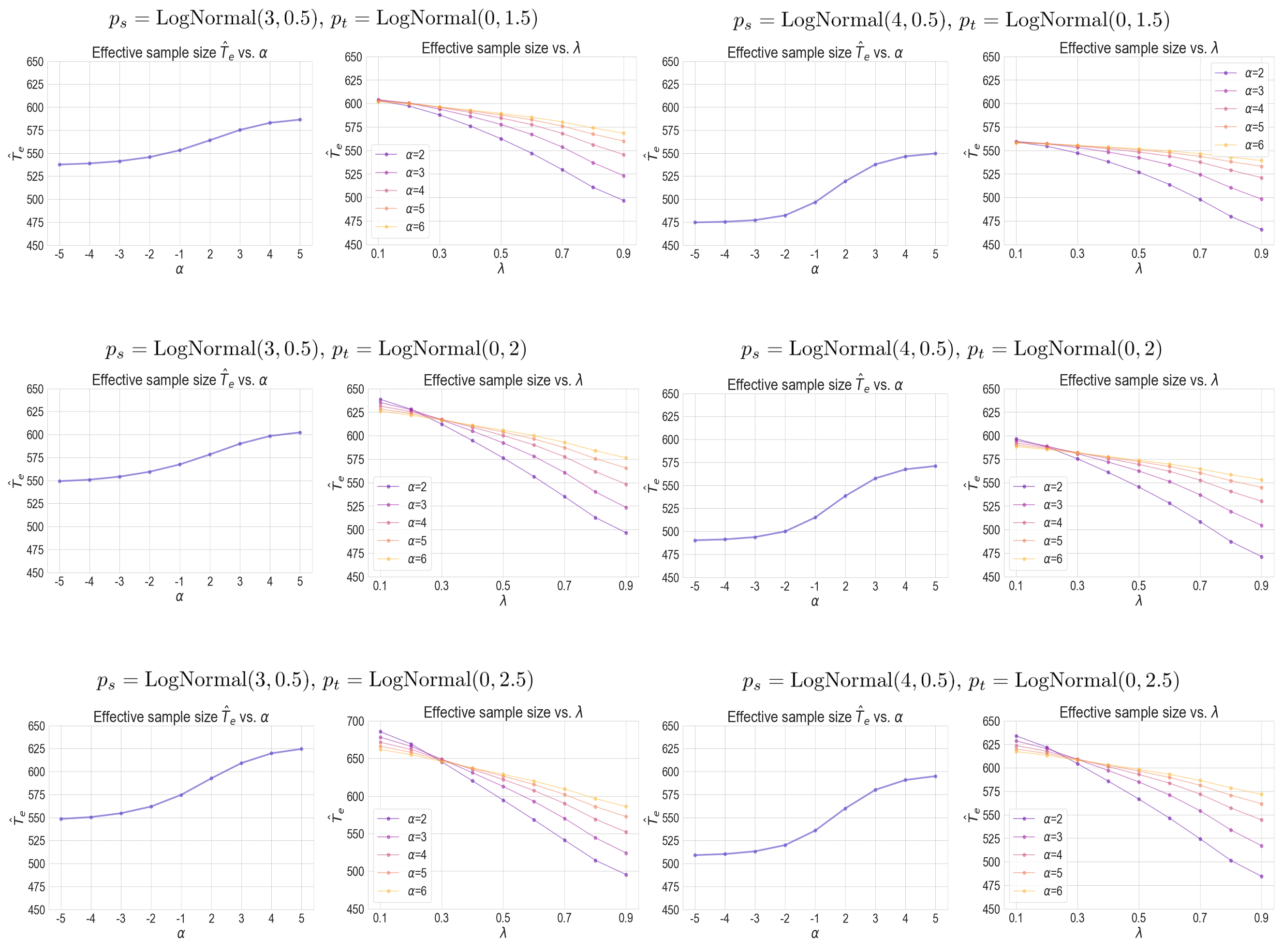}
    \caption{Relationship between ESS and the parameters $\alpha$ and $\lambda$ for Log-normal distributions.}
    \label{fig:ess_lognormal}
\end{figure}

\begin{figure}[t]
    \centering
    \includegraphics[width=\linewidth]{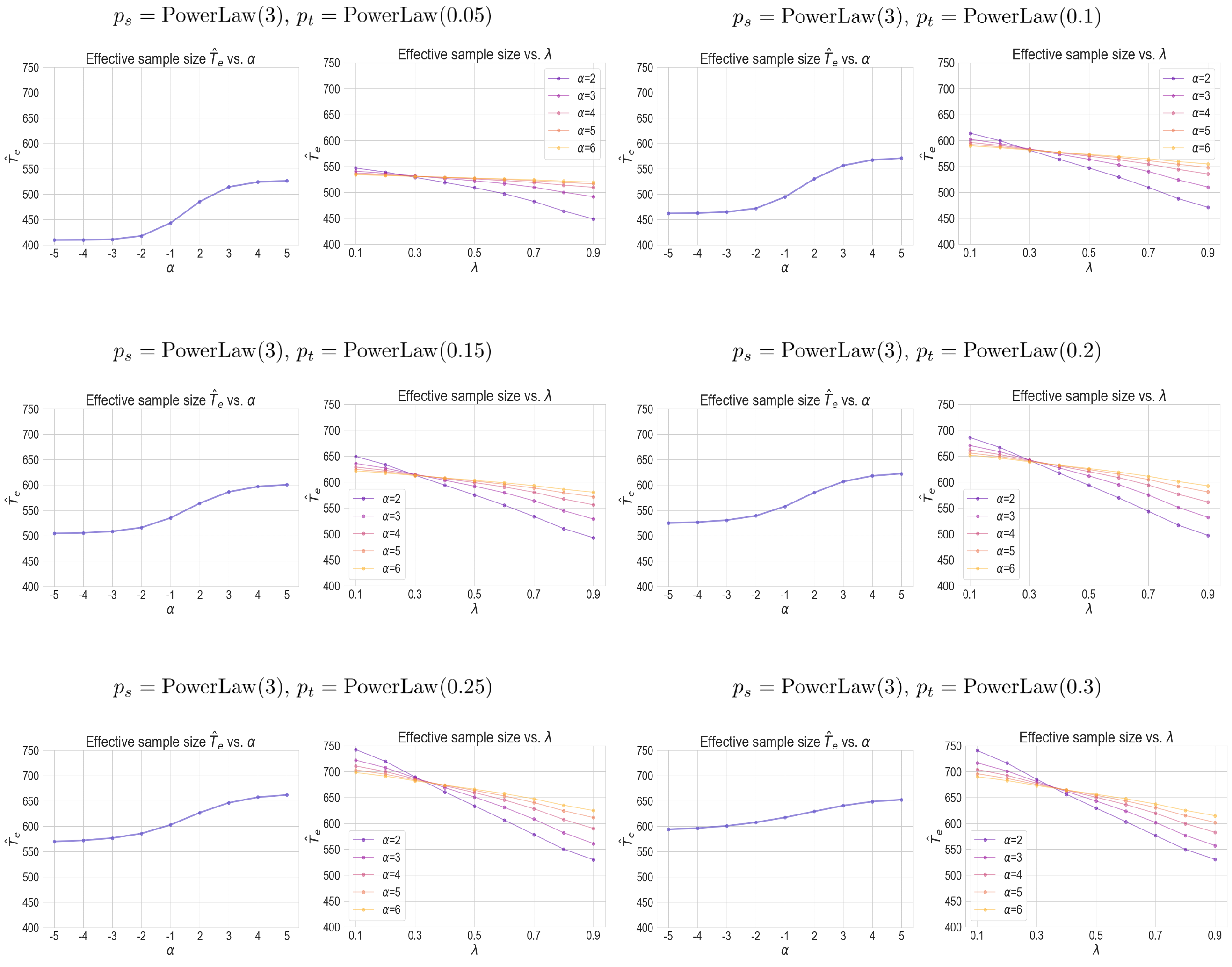}
    \caption{Relationship between ESS and the parameters $\alpha$ and $\lambda$ for Power-law distributions.}
    \label{fig:ess_powerlaw}
\end{figure}

\clearpage

\section{Details of Numerical Experiments and Additional Experimental Results}
\label{apd:experiments}
This appendix provides details of the numerical experiments and additional experimental results.

\subsection{Implementation Details}
\subsubsection{Software}
The results of all numerical experiments are obtained by our Python 3.11.0~\footnote{\url{https://www.python.org/downloads/release/python-3110/}} implementation.
We use numpy~\citep{harris2020array} for random variable generation and matrix calculations, scipy~\citep{virtanen2020scipy} for computing the density function of probability distributions, matplotlib~\footnote{\url{https://matplotlib.org/}} and seaborn~\footnote{\url{https://seaborn.pydata.org/}} for visualization of the results.
In addition, scikit-learn~\citep{pedregosa2011scikit} is used to implement logistic regression, and default parameters are used unless otherwise noted.

\subsubsection{Computing Resources}
All our numerical experiments are performed on a machine with 16 GiB of system memory and 4 vCPUs

\subsubsection{Base Model used for Density Ratio Estimation}
In our experiments, we use a kernel logistic regression classifier as the density ratio estimator.
A kernel logistic regression classifier employs a parametric model of the following form for expressing the class-posterior probability $p(y\mid\bm{x})$.
\begin{align}
    p(y\mid \bm{x}; \bm{\theta}) = \frac{1}{1 + \exp(-y \psi(\bm{x})^\top\bm{\theta})},
\end{align}
where $\psi(\bm{x}) \colon \mathbb{R}^d \to \mathbb{R}^c$ is a basis function vector and $\bm{\theta} \in \mathbb{R}^c$ is a parameter vector.
In our experiments, we use the following basis functions corresponding to the polynomial kernel and B-spline kernel in addition to the linear kernel.
The penalized log-likelihood maximization problem reduces the following minimization.
\begin{align}
    \hat{\bm{\theta}} \coloneqq \argmin_{\bm{\theta}\in\mathbb{R}^c}\left\{\sum^n_{i=1}\ln \left(1 + \exp\left(-y_i \psi(\bm{x}_i)^\top\bm{\theta}\right)\right) + \lambda\bm{\theta}^\top\bm{\theta}\right\},
\end{align}
where $\lambda\bm{\theta}^\top\bm{\theta}$ is a penalty term included for regularization process.
Thus, the loss function $L(y_i, \bm{\theta})$ is
\begin{align}
    L(y_i, \bm{\theta}) \coloneqq \ln\left(1 + \exp\left(-y_i\psi(\bm{x}_i)^\top\bm{\theta}\right)\right).
\end{align}
From Bayes' theorem, the density ratio can be expressed as
\begin{align}
    r(\bm{x}) &= \frac{p_s(\bm{x})}{p_t(\bm{x})} \nonumber \\
    &= \left(\frac{p(y = +1 \mid \bm{x})p(\bm{x})}{p(y = +1)}\right)\left(\frac{p(y = -1 \mid \bm{x})p(\bm{x})}{p(y = -1)}\right)^{-1} \nonumber \\
    &= \frac{p(y = -1)}{p(y = +1)}\frac{p(y = +1 \mid \bm{x})}{p(y = -1 \mid \bm{x})}.
\end{align}
The ratio $p(y = -1) / p(y = +1)$ can be approximated by
\begin{align}
    \frac{p(y = -1)}{p(y = +1)} \approx \frac{n_t / (n_t + n_s)}{n_s / (n_t + n_s)} = \frac{n_t}{n_s}.
\end{align}
A density ratio estimator $\hat{r}$ is then given by
\begin{align}
    \hat{r}(\bm{x}) &= \frac{n_t}{n_s}\frac{1 + \exp\left(\psi(\bm{x})^\top\hat{\bm{\theta}}\right)}{1 + \exp\left(-\psi(\bm{x})^\top\hat{\bm{\theta}}\right)} \nonumber \\
    &= \frac{n_t}{n_s}\frac{\exp\left(\psi(\bm{x})^\top\hat{\bm{\theta}}\right)\left\{\exp\left(-\psi(\bm{x})^\top\hat{\bm{\theta}}\right) + 1\right\}}{1 + \exp\left(-\psi(\bm{x})^\top\hat{\bm{\theta}}\right)} \nonumber \\
    &= \frac{n_t}{n_s}\exp\left(\psi(\bm{x})^\top\hat{\bm{\theta}}\right). \label{eq:logistic_regression_ratio}
\end{align}

\subsubsection{Estimator of Pearson Divergence}
The Pearson divergence $D_{\mathrm{PE}}[p \| q]$ is defined as follows.
\begin{align}
    D_{\mathrm{PE}}[p \| q] \coloneqq \frac{1}{2}\int_{\mathcal{X}}\left(\frac{p(\bm{x})}{q(\bm{x})} - 1\right)^2 q(\bm{x})d\bm{x}.
\end{align}
Here, we have
\begin{align}
    D_{\mathrm{PE}}[p \| q] &= \frac{1}{2}\int_{\mathcal{X}}\left(\frac{p(\bm{x})}{q(\bm{x})} - 1\right)^2 q(\bm{x})d\bm{x} \nonumber \\
    &= \frac{1}{2}\int_{\mathcal{X}}\left\{\left(\frac{p(\bm{x})}{q(\bm{x})}\right)^2 - 2\frac{p(\bm{x})}{q(\bm{x})} + 1\right\}q(\bm{x})d\bm{x} \nonumber \\
    &= \frac{1}{2}\int_{\mathcal{X}}\frac{p(\bm{x})^2}{q(\bm{x})} - 2p(\bm{x}) + q(\bm{x}) d\bm{x} \nonumber \\
    &= \frac{1}{2}\int_{\mathcal{X}}\frac{p(x)^2}{q(\bm{x})}d\bm{x} - \int_{\mathcal{X}}p(\bm{x})d\bm{x} + \frac{1}{2}\int_{\mathcal{X}}q(\bm{x})d\bm{x} \nonumber \\
    &= \frac{1}{2}\int_{\mathcal{X}}\frac{p(\bm{x})}{q(\bm{x})}p(\bm{x})d\bm{x} - \int_{\mathcal{X}}\frac{p(\bm{x})}{q(\bm{x})}q(\bm{x})d\bm{x} + \frac{1}{2} \nonumber \\
    &= \frac{1}{2}\int_{\mathcal{X}}r(\bm{x})p(\bm{x})d\bm{x} - \int_{\mathcal{X}}r(\bm{x})q(\bm{x})d\bm{x} + \frac{1}{2} = \frac{1}{2}\mathbb{E}_{p(\bm{x})}\left[r(\bm{x})\right] - \mathbb{E}_{q(\bm{x})}\left[r(\bm{x})\right] + \frac{1}{2}. \nonumber
\end{align}
The estimator of the Pearson divergence $\hat{D}_{\mathrm{PE}}[X^s \| X^t]$ is 
\begin{align}
    \hat{D}_{\mathrm{PE}}\left[X^s \| X^t\right] \coloneqq \frac{1}{2n_s}\sum^{n_s}_{i=1}\hat{r}(\bm{x}^s_i) - \frac{1}{n_t}\sum^{n_t}_{i=1}\hat{r}(\bm{x}^t_i) + \frac{1}{2}.
\end{align}
Combining with the estimator by Eq.~\eqref{eq:logistic_regression_ratio},
\begin{align}
    \hat{D}_{\mathrm{PE}}\left[X^s \| X^t\right] &= \frac{1}{2n_s}\sum^{n_s}_{i=1}\frac{n_t}{n_s}\frac{1 + \exp\left(\psi(\bm{x}^s_i)^\top\hat{\bm{\theta}}\right)}{1 + \exp\left(\-\psi(\bm{x}^s_i)^\top\hat{\bm{\theta}}\right)} - \frac{1}{n_t}\sum^{n_t}_{i=1}\frac{n_t}{n_s}\frac{1 + \exp\left(\psi(\bm{x}^t_i)^\top\hat{\bm{\theta}}\right)}{1 + \exp\left(\-\psi(\bm{x}^t_i)^\top\hat{\bm{\theta}}\right)} + \frac{1}{2} \nonumber \\
    &= \frac{n_t}{2n^2_s}\sum^{n_s}_{i=1}\frac{1 + \exp\left(\psi(\bm{x}^s_i)^\top\hat{\bm{\theta}}\right)}{1 + \exp\left(\-\psi(\bm{x}^s_i)^\top\hat{\bm{\theta}}\right)} - \frac{1}{n_s}\sum^{n_t}_{i=1}\frac{1 + \exp\left(\psi(\bm{x}^t_i)^\top\hat{\bm{\theta}}\right)}{1 + \exp\left(\-\psi(\bm{x}^t_i)^\top\hat{\bm{\theta}}\right)} + \frac{1}{2}.
\end{align}

\subsubsection{Analytic Calculation of Divergence}
In our numerical experiments, we compute KL-divergence between two Gaussian distributions $p(x) = \mathcal{N}(\mu_1, \sigma_1)$ and $q(x) = \mathcal{N}(\mu_2, \sigma_2)$ analytically.
The calculation is performed as follows.
\begin{align}
    D_{\mathrm{KL}}[p \| q] &= \int_\mathcal{X}\left\{\ln p(x) - \ln q(x)\right\}p(x)dx \nonumber \\
    &= \int_{\mathcal{X}}\left\{- \ln \sigma_1 - \frac{1}{2}\left(\frac{x - \mu_1}{\sigma_1}\right)^2 + \ln \sigma_2 + \frac{1}{2}\left(\frac{x - \mu_s}{\sigma^2}\right)^2\right\}\frac{1}{\sqrt{2\pi\sigma}}e^{-\frac{1}{2}\left(\frac{x - \mu_1}{\sigma_1}\right)^2}\nonumber \\
    &= \int_\mathcal{X}\left\{\ln\frac{\sigma_2}{\sigma_1} + \frac{1}{2}\left(\left(\frac{x - \mu_2}{\sigma_2}\right)^2 - \left(\frac{x - \mu_1}{\sigma_1}\right)^2\right)\right\}\frac{1}{\sqrt{2\pi\sigma}}e^{-\frac{1}{2}\left(\frac{x - \mu_1}{\sigma_1}\right)^2} dx \nonumber \\
    &= \mathbb{E}_{p(x)}\left[\ln\frac{\sigma_2}{\sigma_1} + \frac{1}{2}\left(\left(\frac{x - \mu_2}{\sigma_2}\right)^2 - \left(\frac{x - \mu_1}{\sigma_1}\right)^2\right)\right] \nonumber \\
    &= \ln\frac{\sigma_2}{\sigma_1} + \frac{1}{2\sigma_2^2}\mathbb{E}_{p(x)}\left[(x - \mu_2)^2\right] - \frac{1}{2\sigma_1^2}\mathbb{E}_{p(x)}\left[(x - \mu_1)^2\right] \nonumber \\
    &= \ln\frac{\sigma_2}{\sigma_1} + \frac{1}{2\sigma_2^2}\mathbb{E}_{p(x)}\left[(x - \mu_2)^2\right] - \frac{1}{2} \nonumber \\
    &= \ln\frac{\sigma_2}{\sigma_1} + \frac{1}{2\sigma_2^2}\left\{\mathbb{E}_{p(x)}\left[(x - \mu_1)^2\right] + 2(\mu_1 - \mu_2)\mathbb{E}\left[(x - \mu_1)\right] + (\mu_1 - \mu_2)^2\right\} - \frac{1}{2} \nonumber \\
    &= \ln\frac{\sigma_2}{\sigma_1} + \frac{\sigma_1^2 + (\mu_1 - \mu_2)^2}{2\sigma_2^2} - \frac{1}{2}.
\end{align}

\begin{table}[t]
    \caption{Evaluation results with different $\alpha$ and sample dimensions $d$ with polynomial kernel for Gaussian distributions.
    DRE refers the density ratio estimation without incremental mixtures.}
    \resizebox{\columnwidth}{!}{
    \centering
    \begin{tabular}{lccccc}
        \toprule
                     & $d=2$ & $d=3$ & $d=4$ & $d=5$ & \\
        \midrule
        $DRE$ & $35.98 (\pm 27.33)$ & $1335.85 (\pm 1447.51)$ & $5038.17 (\pm 2666.30)$ & $48586.57 (\pm 19828.46)$ \\
        \hline
         $\alpha=-1$ & $30.61 (\pm 31.03)$ & $715.52 (\pm 143)$ &  $723.75 (\pm 320.86)$ &  $1194.64 (\pm 400.47)$& \\
         $\alpha=3$  & $8.27 (\pm 9.55)$ & $63.14 (\pm 80.29)$ & $296.16 (\pm 339.05)$ & $480.41 (\pm 89.07)$ & \\
         $\alpha=7$  & $6.09 (\pm 0.02)$ & $19.81 (\pm 29.53)$ & $42.52 (\pm 126.12)$ &  $412.99 (\pm 71)$ & \\
         \bottomrule
    \end{tabular}
    }
    \label{tab:dimension_alpha_poly}
\end{table}

\begin{table}[t]
    \caption{Evaluation results with different $\alpha$ and sample dimensions $d$ with spline kernel for Gaussian distributions.
    DRE refers the density ratio estimation without incremental mixtures.}
    \resizebox{\columnwidth}{!}{
    \centering
    \begin{tabular}{lccccc}
        \toprule
                     & $d=2$ & $d=3$ & $d=4$ & $d=5$ & \\
        \midrule
        $DRE$ & $38.13 (\pm 4.04)$  & $485.40 (\pm 71.28)$ & $5167.16 (\pm 1710.77)$ & $48656.67 (\pm 14837.67)$\\
        \hline
         $\alpha=-1$ & $6.09 (\pm 0.01)$ & $123.38 (\pm 122)$& $489.74 (\pm 409.68)$ & $2767.53 (\pm 3596.53)$\\
         $\alpha=3$  &$6.10 (\pm 0.00)$ & $7.69 (\pm 0.01)$ & $22.59 (\pm 0.01)$ & $119.83 (\pm 0.02)$\\
         $\alpha=7$  & $6.10 (\pm 0.00)$ & $7.70 (\pm 0.01)$ & $22.60 (\pm 0.01)$ & $119.85 (\pm 0.01)$\\
         \bottomrule
    \end{tabular}
    }
    \label{tab:dimension_alpha_spline}
\end{table}

\begin{figure}[t]
    \centering
    \includegraphics[width=\linewidth]{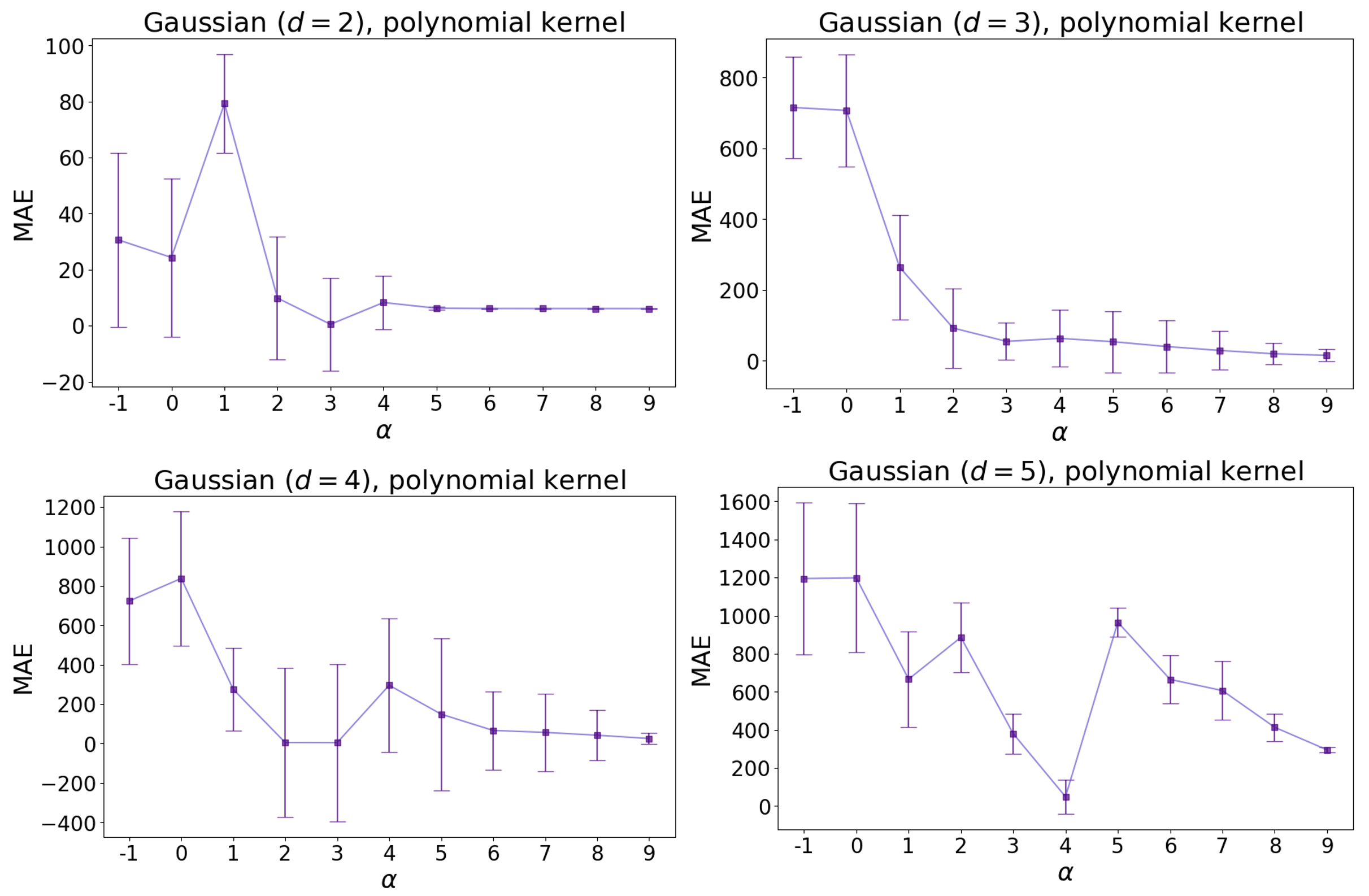}
    \caption{Evaluation results with different $\alpha$ and sample dimensions $d$ with polynomial kernel for Gaussian distributions.}
    \label{fig:poly_gaussian}
\end{figure}

\begin{figure}[t]
    \centering
    \includegraphics[width=\linewidth]{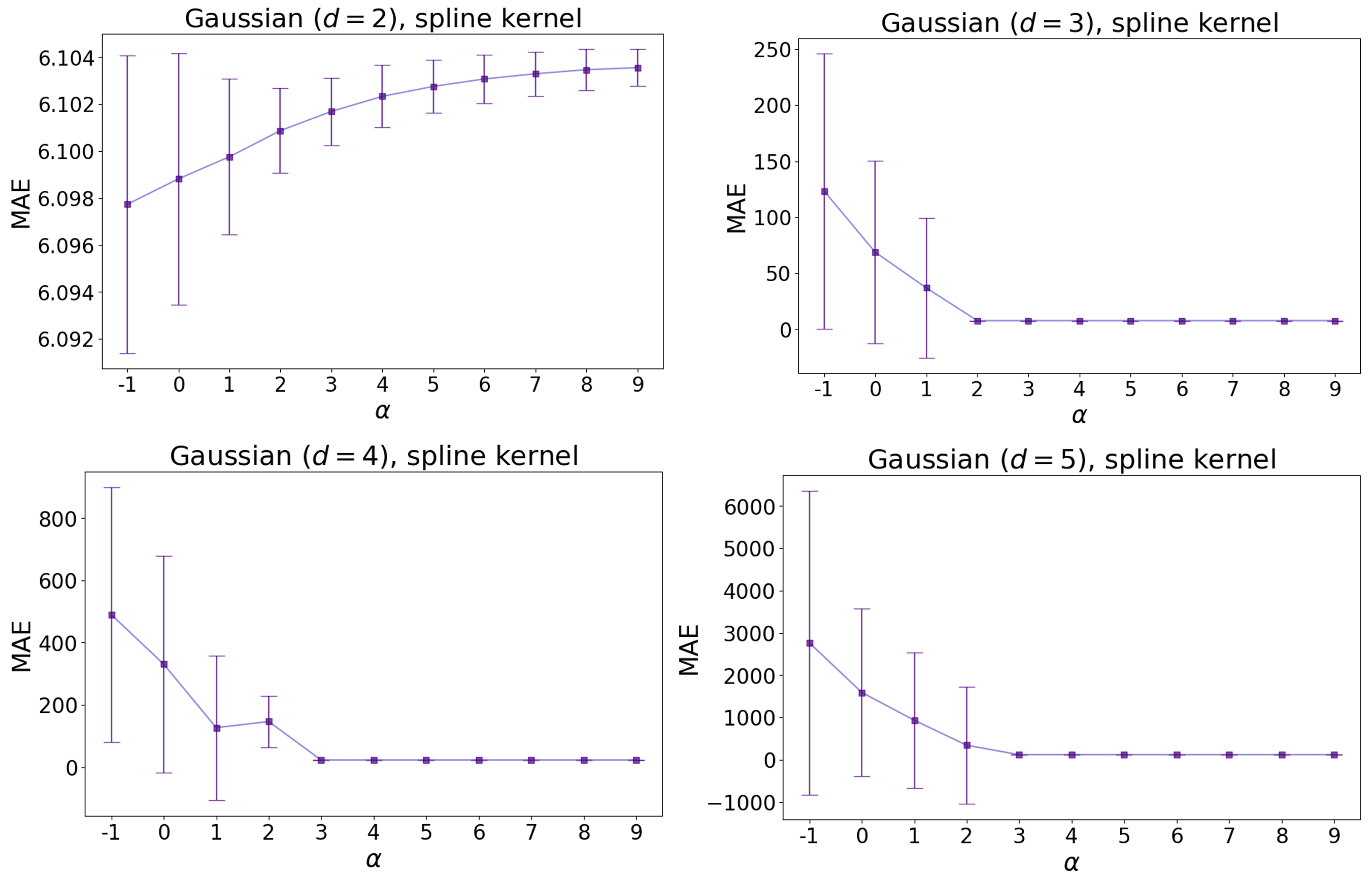}
    \caption{Evaluation results with different $\alpha$ and sample dimensions $d$ with spline kernel for Gaussian distributions.}
    \label{fig:spline_gaussian}
\end{figure}

\subsection{Additional Experimental Results}
In additional experiments, we report results based on density ratio estimation by nonlinear kernel logistic regression.
We use the following two-dimensional polynomial and cubic spline kernels.
\begin{align}
    K_{\mathrm{polynomial}}(\bm{u}, \bm{v}) \coloneqq (\bm{u}^\top\bm{v} + c)^2, \nonumber \\
    K_{\mathrm{spline}}(\bm{u}, \bm{v}) \coloneqq \|\bm{u} - \bm{v} \|^3.
\end{align}

Tables~\ref{tab:dimension_alpha_poly},~\ref{tab:dimension_alpha_spline} and Figures~\ref{fig:poly_gaussian},~\ref{fig:spline_gaussian} show the results of density ratio estimation based on nonlinear kernel logistic regression.
The settings in these experiments are the same as in the main manuscript.
These results show that density ratio estimation along geodesics with large $\alpha$ reduces variances, even when nonlinear kernels are used.

\clearpage

\section{Related Work}
\label{apd:related_work}
\subsection{Applications of Density Ratio}
The density ratio of the two probability distributions has many known applications.
\citet{shimodaira2000improving} shows that in supervised learning under the covariate shift assumption, where the marginal distribution of the input data differs between training and testing, learning by importance weighting using density ratios restores consistency.
This framework is called Importance Weighted Empirical Risk Minimization (IWERM), and many variants have been investigated~\citep{chen2016robust,kimura2022information,kimura2024short,liu2014robust,martin2023double,sugiyama2007covariate,yamada2013relative}.
Density ratios are also known to be effective for outlier detection, and many studies have used estimated density ratios from samples as outlier scores~\citep{chen2015abnormal,hido2011statistical,kanamori2008efficient,li2022robust}.
Another useful application of the density ratio is the two-sample test, which considers whether, given two sample pairs, they are generated from the same distribution~\citep{cheng2004semiparametric,kanamori2011f,keziou2005test,keziou2008empirical,lu2020new}.
Further applications of density ratios include variable selection~\citep{oh2016bayesian}, dimensionality reduction~\citep{suzuki2010sufficient}, causal inference~\citep{matsushita2023estimating}, and estimation of mutual information~\citep{braga2014constructive,suzuki2008approximating}.
Many of them are also discussed theoretically in the use of density ratios.

\subsection{Methods of Density Ratio Estimation}
Because of this wide range of applications, density ratio estimation has become one of the most important tasks.
The simplest idea is to estimate the density ratio based on a separate density estimation, but these two-step approaches are known to be computationally unstable~\citep{sugiyama2012density}.
A more promising idea is a framework for direct density ratio estimation.
One idea is the moment matching approach which tries to match the moments of $p_s(\bm{x})$ and $p_t(\bm{x})$ by considering $\hat{r}(\bm{x})$ as a transformation function~\citep{huang2006correcting}.
Another well-known approach to density ratio estimation is to optimize the divergence between one distribution $p_s(\bm{x})$ and the transformed other $\hat{r}(\bm{x})p_t(\bm{x})$~\citep{sugiyama2012density_bregman,sugiyama2007direct}.
Recently, algorithms for density ratio estimation based on generative models have also been proposed~\citep{choi2021featurized}.


\end{document}